\newtheorem{amp}{Assumption}[section]
\newtheorem{lma}{Lemma}[section]
\newtheorem{thm}{Theorem}[section]
\newtheorem{defn}{Definition}[section]
\newtheorem{prop}{Proposition}[section]
\icmltitlerunning{Theoretical Analysis of Image-to-Image Translation with Adversarial Learning}
\begin{document}

\twocolumn[
\icmltitle{Theoretical Analysis of Image-to-Image Translation with Adversarial Learning}

% \icmltitle{Submission and Formatting Instructions for \\
%            International Conference on Machine Learning (ICML 2018)}

% It is OKAY to include author information, even for blind
% submissions: the style file will automatically remove it for you
% unless you've provided the [accepted] option to the icml2018
% package.

% List of affiliations: The first argument should be a (short)
% identifier you will use later to specify author affiliations
% Academic affiliations should list Department, University, City, Region, Country
% Industry affiliations should list Company, City, Region, Country

% You can specify symbols, otherwise they are numbered in order.
% Ideally, you should not use this facility. Affiliations will be numbered
% in order of appearance and this is the preferred way.

\begin{icmlauthorlist}
\icmlauthor{Xudong Pan}{fudan}
\icmlauthor{Mi Zhang}{fudan}
\icmlauthor{Daizong Ding}{fudan}
\end{icmlauthorlist}

\icmlaffiliation{fudan}{Shanghai Key Laboratory of Intelligent Information Processing, School of Computer Science, Fudan University, China}

\icmlcorrespondingauthor{Mi Zhang}{mi\_zhang@fudan.edu.cn}

% You may provide any keywords that you
% find helpful for describing your paper; these are used to populate
% the "keywords" metadata in the PDF but will not be shown in the document
\icmlkeywords{Generative Adversarial Learning, Generalization, Theoretical Interpretations, Unsupervised Learning, Machine Learning}

\vskip 0.3in
]

% this must go after the closing bracket ] following \twocolumn[ ...

% This command actually creates the footnote in the first column
% listing the affiliations and the copyright notice.
% The command takes one argument, which is text to display at the start of the footnote.
% The \icmlEqualContribution command is standard text for equal contribution.
% Remove it (just {}) if you do not need this facility.

%\printAffiliationsAndNotice{}  % leave blank if no need to mention equal contribution
\printAffiliationsAndNotice{} % otherwise use the standard text.

%% HERE BEGIN MY MODULES FOR THIS PAPER
\begin{abstract}
Recently, a unified model for image-to-image translation tasks within adversarial learning framework \cite{Isola2017ImagetoImageTW} has aroused widespread research interests in computer vision practitioners. Their reported empirical success however lacks solid theoretical interpretations for its inherent mechanism. In this paper, we reformulate their model from a brand-new geometrical perspective and have eventually reached a full interpretation on some interesting but unclear empirical phenomenons from their experiments. Furthermore, by extending the definition of generalization for generative adversarial nets \cite{Arora2017GeneralizationAE} to a broader sense, we have derived a condition to control the generalization capability of their model. According to our derived condition, several practical suggestions have also been proposed on model design and dataset construction as a guidance for further empirical researches.
\end{abstract}
\section{Introduction}

%% THE A RELATIVE GENERAL BACKGROUND ON GAN AND FALL BACK TO IMAGE-2-IMAGE TASK
Generative adversarial nets (GAN) \cite{Goodfellow2014GenerativeAN} have been a trending topic in machine learning community recent years, leading to a number of derived models \cite{Mirza2014ConditionalGA, Arjovsky2017WassersteinGA} and related theoretical works \cite{Arjovsky2017TowardsPM, Lei2017AGV}.
With wide and fruitful applications in various scenarios such as speech synthesis \cite{Saito2018StatisticalPS}, text generation \cite{Zhang2017AdversarialFM} and a considerable amount of visual tasks \cite{Denton2015DeepGI,Wu2016LearningAP,Kataoka2017AutomaticMC}, the idea behind GAN and its derivations is relatively simple and intuitive. It aims at learning a mapping from an artificial distribution, usually priorly known gaussian for original GAN and an unknown distribution of \textit{labels} for conditional GAN \cite{Mirza2014ConditionalGA}, to a real one. Via attaining an equilibrium of the minimax game \cite{aumann1989game} between a \textit{generator} (i.e. an adaptive model that maps a gaussian noise to a fake sample) and a \textit{discriminator} (i.e. an adaptive model to distinguish a fake sample from a real distribution), the adversarial learning models will finally learn a realistic distribution for further generative tasks  \cite{Arora2017GeneralizationAE}. 

%% THE HISTORY OF IMAGE TRANSFER TASKS AND FALL BACK TO ISOLA's Proposal
Noticeably, last year has also witnessed an empirical success on traditional image-to-image translation tasks with the aid of a model under conditional GAN paradigm \cite{Isola2017ImagetoImageTW}, arousing widespread research interest in computer vision practitioners \cite{Zheng2017PhototoCaricatureTO,choi2017stargan}. Image-to-image translation, as a generic name for various specific tasks in image processing, includes tasks such as facial expression transfer (e.g. poker face $\to$ smile face), artistic style transfer (e.g. Van Gogh's $\to$ Monet's). Generally speaking, the goal of image-to-image translation is to process an image from a source collection to make it indistinguishable among a target collection of images. Although related models and methods abound in literature \cite{Reinhard2001ColorTB, Gatys2016ImageST, Zeiler2011FacialET}, the first attempt to tackle image-to-image translation as a whole instead of focusing on one of its specific tasks exclusively, ought to be attributed to the pioneering work of Isola et al. \yrcite{Isola2017ImagetoImageTW}, where the powerfulness of adversarial learning with conditional GAN has been once again exhibited sufficiently. We briefly review Isola's original model as the optimization problem below,  

%% I THINK I CAN MAINTAIN THIS PART BUT JUST TRY TO MAKE THE PARAGRAPH ABOVE MORE INFORMATIVE
%% INTRODUCTION OF ISOLA's WORK (THE FORMULA)
\begin{equation} \label{eq:img2img_obj}
\min_{G}\max_{D} \mathcal{L}_{\text{cGAN}}(G,D) + \lambda \mathcal{L}_{L_1}(G)
\end{equation}
with $\mathcal{L}_{\text{cGAN}}$ the \textit{conditional GAN-loss} (or generally, \textit{adversarial loss}) defined as
\begin{align} \label{eq:gan_loss}
\mathcal{L}_{\text{cGAN}}(G,D) = \mathbb{E}_{x,y\sim{p_r(x,y)}}[\log{D(x,y)}] \nonumber \\
\qquad{} + \mathbb{E}_{x\sim{p_r(x)}}[1-\log{D(x,G(x))}] 
\end{align}
and $\mathcal{L}_{L_1}$ the \textit{L1 loss} (or \textit{identity loss}) as
\begin{equation}\label{eq:L1_loss}
\mathcal{L}_{L_1}(G) = \mathbb{E}_{x,y\sim{p_r(x,y)}}[\|y-G(x)\|_1] \\
\end{equation}
where $p_r(x,y)$ denotes the distribution of \textit{paired images} (e.g. in facial expression transfer, Bob's poker face and his ground-truth smile face) and $p_r(x)$ the distribution of images over the source collection, with $G, D$ respectively the generator and the discriminator, $\lambda$ the regularization factor.

%% INTRODUCTION OF ISOLA's WORK (THE REPORTED PHENOMENONS)
As reported in their experiments \cite{Isola2017ImagetoImageTW}, several interesting but theoretically unclear results have attracted our attentions,
\begin{itemize}
\item Omitting the adversarial loss, i.e. solving $\mathcal{L}_{L_1}(G)$ alone, will "lead to reasonable but blurry results" (i.e. generated related target images, however with details hard to recognize), which we refer to as \textit{Blurry versus Sharp}.
\item Omitting the identity loss, i.e. setting the regularization factor $\lambda$ to 0, "gives much sharper results, but results in some artifacts" (i.e. generated realistic images however unrelated to the given source image), which we refer to as \textit{Source of Artifacts}.
\end{itemize}
%% BEGIN TO LOCATE THE RESEARCH GAP (HOW TO MAKE THE READER CONSIDER THIS WORK IS IMPORTANT, AT LEAST INTERESTING) BETTER PRESENT OUR SENSE ON THE PHENOMENONS

%% THE CURRENT THEORETICAL RESULTS HARDLY HELP
Despite a number of studies devoted to analyzing and improving the training dynamics and generalization capability of GAN \cite{Arjovsky2017TowardsPM,Arora2017GeneralizationAE}, there is rarely applicable theoretical results for analyzing conditional GAN, thus Isola's original model and its empirical results. The inappropriateness mainly comes from Eq. \ref{eq:gan_loss}, where the model generates fake images directly from a given image of intensively high dimension \cite{Lu1998ImageM}, instead of a low-dimensional gaussian noise in GAN. In fact, the simple violation of the low-dimensional assumption would immediately invalidate most of the previously obtained theoretical results for GAN. Considering the worthiness of obtaining reasonable theoretical interpretations as guidance for further researches, we formulate this non-standard model from a geometrical perspective, propose an extended definition of generalization for conditional GAN and have eventually reached some inspiring theoretical results.

%% BRIEF REVIEW OF OUR METHODOLOGY
In this paper, for the convenience of mathematical manipulation, we will study a slightly different objective from Isola's original model (Eq. \ref{eq:img2img_obj}), by substituting the ordinary conditional GAN loss (Eq. \ref{eq:gan_loss}) with the Wasserstein GAN (WGAN) loss below as Eq. \ref{eq:cwgan_loss}. As is well known, such a replacement is usually adopted by experimenters to stabilize the model's training dynamics \cite{Arjovsky2017TowardsPM}.

\begin{equation} \label{eq:cwgan_loss}
\mathcal{L_{\text{adv}}}(G) =  \underset{\gamma\in{\Pi(p_r, p_g)}}{\inf}\mathbb{E}_{(x,y)\sim{\gamma}}[\|G(x)-y\|]
\end{equation}
 where $p_g$ denotes the distribution of images over the target collection, with $\Pi(p_r, p_g)$ the set of joint distributions for pairs of images $(x, y)$ s.t. the marginal distributions are equal to $p_r, p_g$. Note the explicit term of discriminator in GAN (Eq. \ref{eq:gan_loss}) is actually replaced by the inner optimal transport task \cite{Villani2010OptimalT} implicitly in WGAN loss (Eq. \ref{eq:cwgan_loss}).
 
Therefore, the corresponding objective can be formulated as

\begin{equation} \label{eq:img2img_obj_dev}
\overbrace{\min_{G} \underbrace{\mathcal{L}_{\text{adv}}(G)}_{\text{adversarial loss}} + \lambda \underbrace{\mathcal{L}_{L_1}(G)}_\text{identity loss}}^{\text{target model}}
\end{equation}

Aiming at exploring the intrinsic mechanism of our target model, we first formulate the image-to-image translation task with adversarial learning from a geometrical viewpoint (Section \ref{sec:geom_formulate}). With some basic results from topology and analysis, we have proved that the adversarial loss has an equivalent form (Eq. \ref{eq:final_equiv_form}), degenerated as a set of individual learning tasks between paired \textit{charts} (i.e. local neighborhoods homeomorphism to Euclidean space). We call such a result as \textit{natural localization of adversarial loss} (Theorem \ref{thm:natural_localization}). As a direct application of our theorem, theoretical interpretations has been presented fully for Source of Artifacts and partially for Blurry versus Sharp (Section \ref{sec:partial_interp}).

In order to explore the properties of our target model more quantitatively, we have extended the definition of generalization for GAN proposed by Arora et al. \yrcite{Arora2017GeneralizationAE} to a broader case for analysis of conditional GAN (Definition \ref{def:gen_wrt_gen}). We have then pointed out the relation between generalizations in different senses with a generic inequality for the first time as far as we know (Theorem \ref{thm:rel_generalization}) and have finally obtained the full picture of the mechanism behind Blurry versus Sharp (Section \ref{sec:full_interp}).

As a step further, we have derived a condition on controlling generalization for our target model with additional statistical settings (Theorem \ref{thm:generalization_cond}). As we will see, the derived inequality (Eq. \ref{eq:generalization_cond}) imposes concrete constraints on both the sample complexity and model complexity, which provides practical guidance on model design and dataset construction for further applications (Section \ref{sec:guidance}). Conclusions and future directions are provided in Section \ref{sec:conclude}.  

%% CONTRIBUTIONS
Generally, our contributions are outlined as follows,
\begin{enumerate}
 \setlength{\itemsep}{1pt}
\item A proposed geometrical formulation of image-to-image translation task with adversarial learning (Section \ref{sec:geom_formulate}) 
\item Reduction of the adversarial loss to a set of independent learning tasks between paired charts (Theorem \ref{thm:natural_localization})
\item An extended definition of generalization for conditional GAN (Definition \ref{def:gen_wrt_gen}) and a derived condition on generalization (Theorem \ref{thm:generalization_cond}) for our target model
\item Theoretical interpretations for several unclear empirical phenomenons reported in previous works (Section \ref{sec:partial_interp} \& \ref{sec:full_interp}), together with a practical guidance on model design and dataset construction for practitioners (Section \ref{sec:guidance})
\end{enumerate}

\iffalse
We take the original formula of Wasserstein distance here without resorting to solve its Kantorovich-Rubinstein dual \cite{Villani2010OptimalT}.
\fi

\section{Preliminaries} \label{sec:geom_formulate}
In Section \ref{sec:21} \& \ref{sec:22}, we equip a set of images with additional geometrical structures. In Section \ref{sec:23}, we correspondingly reformulate image-to-image translation with adversarial learning by extending the concept of generator and discriminator. A reformulation of our target model will thus be given in Eq. \ref{eq:geo_gan_loss} \& \ref{eq:geo_L1_loss} as a basis for analysis in the remainder of this paper.

\subsection{Set of Images as Smooth Manifold} \label{sec:21}
Without loss of generality, we mainly focus on the image-to-image translation task from a source set of RGB images $\mathcal{I}_S$ to a target set $\mathcal{I}_T$, with images of the same resolution $w\times{h}$. 
%% related literature with such a consideration
As is well-known, an image can always be considered as an element in an ambient Euclidean space (here, specifically $\mathbb{R}^{3\times{w}\times{h}}$). In fact, there also exists an intrinsic structure over the image set alongside with the ambient space, as is validated by various empirical works previously \cite{Lu1998ImageM, Zhu2016GenerativeVM}. Such an intrinsic structure is usually formulated as a smooth manifold mathematically \cite{ Arjovsky2017TowardsPM, Lei2017AGV}. For the basics of intrinsic geometry, see standard texts such as Lee's \yrcite{lee2010introduction}.
%% Experimental Settings before further justifying the appropriateness of such a mathematical formualtion

In this work, we make a similar assumption as follows,
\begin{amp} \label{amp:mfd}
There exist smooth, locally compact $d$-dimensional manifolds $\mathcal{M}$, $\mathcal{N}$ embedded in $\mathbb{R}^{w\times{h}}$, with constructed atlas as $\{(U_i,\varphi_i)\}_{i=1}^{K}$, $\{(V_i, \psi_i)\}_{i=1}^{K}$,  respecting pairwise disjointness property, i.e. $\forall{i,j}\in[K]$, $U_i\cap{U_j} = \varnothing$, $V_i\cap{V_j} = \varnothing$ if $i\neq{j}$, such that $\mathcal{I}_S\subset{\mathcal{M}}$, $\mathcal{I}_T\subset{\mathcal{N}}$. ($[K]$ denotes the set $\{1,2,\hdots, K\}$ and K a natural number)
\end{amp}

As a comment, the assumption of equal dimensions contained above is only for the convenience of notation simplification. Results presented in the remainder of this paper can be directly extended to the situation when source and target image manifolds are of different dimensions. 

%% this paper will not be confined by the equal dimension ,although there may be some situation like degenerate
\iffalse
As a comment, the assumption of equal dimensions contained above is proper and practical, not just for the convenience of notation simplification. One reasonable justification can be given as: from a number of results reported in empirical studies on image-to-image translation before \cite{zhu2017unpaired, choi2017stargan}, there is barely a noticeable increasing or decreasing in visual complexity of images in target sets, if compared with the source images. Such a visual complexity is directly related with the dimension of the underlying manifold structure, as is pointed out by Lu et al. \yrcite{Lu1998ImageM}.
\fi

%% our assumption and notations
\subsection{Induced Probability Measure on Image Manifold} \label{sec:22}
With Assumption \ref{amp:mfd}, we are able to divide the image sets $I_S\subset\mathcal{M}$, $I_T\subset{\mathcal{N}}$ into finer subsets, formally, that is 
$\mathcal{I}_S = {\cup}_{k=1}^{K}\mathcal{I}_{S}^{k}$, $\mathcal{I}_T = {\cup}_{k=1}^{K}\mathcal{I}_{T}^{k}$, where $\forall{k}\in[K]$, $\mathcal{I}_S^{k} \doteq \{s^{i}_k\}_{i=1}^{m_k}\subset{U_k}$ and $\mathcal{I}_T^{k} \doteq \{t^{i}_k\}_{i=1}^{n_k}\subset{V_k}$. 

In order to describe the relatedness of images from the same chart, the following assumption is imposed. 

\begin{amp} \label{amp:probability}
For each $k\in[K]$, there exists probability measures $\mu_{k}$, $\nu_{k}:\mathcal{B}(\mathbb{R}^{d})\to[0,1]$, supported on $\varphi_k(U_k)$ and $\psi_k(V_k)$ respectively, such that $\{\varphi_k(s^{i}_k)\}_{i=1}^{m_k}\overset{i.i.d.}{\sim}\mu_k$, $\{\psi_k(t^{i}_k)\}_{i=1}^{n_k}\overset{i.i.d.}{\sim}\nu_k$, where $\mathcal{B}(\mathbb{R}^d)$ denotes the Borel set over $\mathbb{R}^{d}$.
\end{amp}

With probability measures defined on each chart (precisely, its homeomorphism as $\mathbb{R}^{d}$), we would like to "glue" them together to induce a unified probability measure (denoted respectively as $\mu_\mathcal{M}, \nu_\mathcal{N}$) globally over the underlying manifold structure, with the aid of the following lemma.

\begin{lma} \label{lma:pm_over_mfd}
Given a smooth manifold $\mathcal{M} = \{(U_i,  \varphi_i)\}_{i=1}^{K}$ with pairwise disjointness and $\{\mu_i\}_{i=1}^{K}$ as the probability measures supported on $\{\varphi_i(U_i)\}_{i=1}^{K}$ correspondingly, a function $\mu_{\mathcal{M}}:\mathcal{B}(\mathcal{M})\to[0,1]$ is defined by
\begin{equation} \label{eq:pm_over_mfd}
	 d\mu_{\mathcal{M}}(s) = \frac{1}{K}\sum_{i=1}^{K}\mathbf{1}_{s\in{U_i}}d\mu_{i}\circ\varphi_i(s)
\end{equation}
Then $\mu_{\mathcal{M}}$ is a probability measure defined on $\mathcal{M}$. 
\end{lma}
\begin{proof}
See Appendix A. Although Definition \ref{eq:pm_over_mfd} seemingly contains some notation abusing (consider if $s\notin{U_i}$, $\varphi_i(s)$ is not defined), we can actually avoid this awkwardness according to the pairwise disjointness assumption, that is, all except one $\mathbf{1}\{s\in{U_i}\}$ is non-vanishing for any $s\in\mathcal{M}$.
\end{proof}

\iffalse
\begin{comment}
By applying Lemma \ref{lma:pm_over_mfd} to the tuples $(\mathcal{M}$, $\{\mu_i\}_{i=1}^{K})$ and $(\mathcal{N}$, $\{\nu_i\}_{i=1}^{K})$, we have successfully constructed probability measures globally defined over both of the image manifolds, denoted as $\mu_{\mathcal{M}}$, $\nu_{\mathcal{N}}$. 
\end{comment}
\fi

\subsection{A Geometrical Formulation of Image-to-Image Translation with Adversarial Learning} \label{sec:23}
After assuming additional geometrical structure on image set, the definition of generator and discriminator in our target model requires slight modifications correspondingly.

\textbf{On Generator} In our context, the generator should be redefined as a mapping between manifolds instead of flat Euclidean spaces. Formally, we denote the generator as $G:\mathcal{M}\to\mathcal{N}$, a measurable mapping w.r.t. $\mathcal{M}$, $\mathcal{N}$.

\textbf{On Discriminator} 
As we have pointed out, within the WGAN setting, the role of discriminator is correspondingly abdicated to the set of joint distributions $\Pi(p_r, p_g)$ and the norm $\|\bullet\|$ contained in Eq. \ref{eq:cwgan_loss}. However, the latter is usually not well-defined in manifold settings. As a natural way to make Eq. \ref{eq:cwgan_loss} \& \ref{eq:img2img_obj_dev} proper, we further equip the manifold structure $\mathcal{N}$ underlying the target set with a Riemmanian metric $\tau$  \cite{jost2008riemannian}, with a little more technical conditions for regularity. Eventually it comes to our formulation of the adversarial loss and the corresponding identity loss, with relatively minor modifications compared with Eq. \ref{eq:L1_loss} \& \ref{eq:img2img_obj_dev}.
\begin{align} \label{eq:geo_gan_loss}
\mathcal{L}_{adv}^{'}(G) = \underset{\gamma\in{\Pi(\mu_\mathcal{M}, \nu_\mathcal{N})}}{\inf}\mathbb{E}_{(s,t)\sim{\gamma}}d_{\mathcal{N}}(G(s),t)
\end{align}
\begin{equation}\label{eq:geo_L1_loss}
\mathcal{L}_{L_1}^{'}(G) = \mathbb{E}_{s,t\sim{p_r(s,t)}}d_{\mathcal{N}}(G(s),t) \\
\end{equation}
where $d_\mathcal{N}(\bullet, \bullet)$ denotes the $\tau$-induced geodesic distance on $\mathcal{N}$ \cite{jost2008riemannian}. For compatibility with inner-relatedness, we further assume $\forall{i\neq{j}\in[K]}$,  $\forall{x,y\in{V_i}, z\in{V_j}}$, $d_{\mathcal{N}}(x, y) \le d_\mathcal{N}(x,z)$.

\iffalse
\begin{comment}
and $\forall \gamma \in \Pi(\mu_\mathcal{M}, \nu_\mathcal{N})$, the following constraints are satisfied,
\begin{align} \label{eq:jt_constraints}
\int_{\mathcal{M}} \gamma(s, \bullet) d\mu_{\mathcal{M}} = \nu_{\mathcal{N}} \\
\int_{\mathcal{N}} \gamma(\bullet, t) d\nu_{\mathcal{N}} = \mu_{\mathcal{M}}
\end{align}
\end{comment}
\fi

\section{Natural Localization of Adversarial Loss}
A widely recognized difficulty for obtaining analytic solutions for adversarial loss lies in the nested optimization problem \cite{Goodfellow2014GenerativeAN} (here, specifically $\min_{G}\mathcal{L}_{\text{adv}}$).  In order to avoid such an obstacle, we will prove in this section that, within our proposed framework above, the inner infimum term in Eq. \ref{eq:geo_gan_loss} could be solved in closed form with non-trivial constraints on candidate set of generator $G$ (Theorem \ref{thm:natural_localization}). Furthermore, we have observed that the closed form has a decomposition as a set of independent learning tasks on \textit{paired charts} (i.e. a tuple of charts respectively of $\mathcal{M}, \mathcal{N}$, such as $(U_i, V_j)$), with the relations uniquely determined by the candidate sets (Eq. \ref{eq:final_equiv_form}). This result directly leads to theoretical interpretations fully for Source of Artifacts and partially for Blurry versus Sharp (Section \ref{sec:partial_interp}).

\subsection{An Equivalent Form of $\mathcal{L}_{adv}^{'}$}
We start our derivation by giving the explicit form of the probability measures $\mu_{\mathcal{M}}$, $\nu_{\mathcal{N}}$ on manifolds, constructed with the aid of Lemma \ref{lma:pm_over_mfd}.
\begin{equation} \label{eq:mu_form}
	 d\mu_{\mathcal{M}}(s) \doteq \frac{1}{K}\sum_{i=1}^{K}\mathbf{1}_{s\in{U_i}}d\mu_{i}\circ\varphi_i(s)
\end{equation}
\begin{equation}\label{eq:nu_form}
	 d\nu_{\mathcal{N}}(t) \doteq \frac{1}{K}\sum_{i=1}^{K}\mathbf{1}_{t\in{V_i}}d\nu_{i}\circ\psi_i(t)
\end{equation}
For simplicity, we denote $d\tilde{\mu}_i = d\mu_{i}\circ\varphi_i$ and $d\tilde{\nu}_i = d\nu_{i}\circ\psi_i$, $\forall{i}\in{[K]}$.

We then expand $\mathcal{L}_{adv}^{'}$ with assumed pairwise disjointness property and obtain
\begin{align}\label{eq:decomp_form}
\underset{\gamma\in{\Pi(\mu_\mathcal{M}, \nu_\mathcal{N})}}{\inf}\mathbb{E}_{(s,t)\sim{\gamma}}\sum_{i=1}^{K}\sum_{j=1}^{K}\mathbf{1}_{s\in{U_i}}\mathbf{1}_{t\in{V_j}} d_{\mathcal{N}}(G(s),t)
\end{align}
By exchanging the expectation operator with summations according to Fubini's theorem \cite{Rudin2010REALAC} and writing the expectation directly in integral form, we have
\begin{align} \label{eq:interm_form}
\underset{\gamma\in{\Pi(\mu_\mathcal{M}, \nu_\mathcal{N})}}{\inf}\sum_{i=1}^{K}\sum_{j=1}^{K}\int_{U_i}\int_{V_j} d_{\mathcal{N}}(G(s),t)d{\gamma(s,t)}
\end{align}

With a similar technique adopted in Dai et al. \yrcite{Dai2008TranslatedLT}, for every $\gamma\in\Pi(\mu_\mathcal{M}, \nu_\mathcal{N})$, there exist functions $\Delta:\mathcal{N}\times{\mathcal{N}}\to\mathbb{R}^{+}\cup\{0\}$ and $f_{\gamma}:\mathcal{M}\to\mathcal{N}$, satisfying
\begin{align} \label{eq:jt_rewritten}
		d{\gamma(s,t)} = d{\gamma(t|s)}d{\mu_{\mathcal{M}}(s)} = \Delta(f_{\gamma}(s),t)d{\mu_{\mathcal{M}}(s)}d{\nu_{\mathcal{N}}(t)}
\end{align}

where $\Delta$ has an intuitive interpretation as a metric of similarity between elements on manifold $\mathcal{N}$, usually independent of the choice of path and compatible with inner-relatedness. Recall $\mathcal{N}$ as a Riemmanian manifold is naturally equipped with a 'divergence' metric $\tau$. We claim it is proper to absorb the term $\Delta(f_{\gamma}(s),t)$ into $d_{\mathcal{N}}(G(s),t)$ with the following observations.
\begin{itemize}

\item [a)] Equivalence of optimization problems (without boundary condition) \cite{boyd2004convex}
\begin{itemize}
		\item $\min_{G} \min_{f_{\gamma}}\Delta(f_{\gamma}(s),t)d_{\mathcal{N}}(G(s),t)$
       \item $\min_{G}\Delta(G(s),t)d_{\mathcal{N}}(G(s),t)$ 
\end{itemize} considering the relatively large learning capacity of G, usually implemented as a neural network \cite{Sontag1998VCDO}.
 
 \item [b)] It is possible to alter the choice of the original metric $\tau$ to be the metric $\tau^{'}$ induced by a modified distance function $d_{\mathcal{N}}^{'}(\bullet, \bullet) = \Delta(\bullet, \bullet)d_\mathcal{N}(\bullet, \bullet)$, which is asserted by the following lemma. 
%% LEMMA FOR ABSOPTION WITH 
\begin{lma} \label{lma:absorption}
Consider Riemmanian manifold $(\mathcal{N}, \tau)$ with curvature locally bounded above and below, $\tau \in C^{\infty}$ and its induced distance function denoted as $d_\mathcal{N}$, then for any path-independent function $f:\mathcal{N}\times{\mathcal{N}}\to\mathbb{R}^{+}\cup\{0\}$, there exists a Riemmanian metric $\tau^{'}$ on $\mathcal{N}$,  induced by the distance function
$$
d_{\mathcal{N}}^{'}(x,y) = f(x,y)d_\mathcal{N}(x,y)\quad{}\forall{x,y}\in{\mathcal{N}}
$$
\end{lma}
\begin{proof}
See Appendix A.
\end{proof}
\end{itemize}
After we rearrange $d_\mathcal{N}(G(s), t)d\gamma$ as $d_\mathcal{N}^{'}(G(s), t)d\gamma{'}$, the boundary condition $\int_{\mathcal{M}}\int_{\mathcal{N}}d\gamma^{'}=1$ requires renormalization. By introducing an additional matrix $A\in\mathbf{H}(K)$ s.t. $\mathbf{H}{(K)}\doteq\{A\in\mathbb{R}^{K\times{K}} | \forall{j}\in{K},\sum_{i}A^{ij} = K$; $\forall{i,j}\in[K], A^{ij} \ge 0\}$, the adversarial loss $\min_G\mathcal{L}^{'}_{adv}(G)$ can be reformulated as (following Eq. \ref{eq:mu_form}, \ref{eq:nu_form}, \ref{eq:interm_form}, with details in Appendix A)
\begin{gather} \label{eq:equiv_form}
	\min_{G} \min_{A\in \mathbf{H}(K)} \sum_{i=1}^{K}\sum_{j=1}^{K}\int_{U_i}\int_{V_j}A^{ij} d^{'}_{\mathcal{N}}(G(s),t)d\tilde{\mu}_{i}(s)d\tilde{\nu}_{j}(t)
\end{gather}

\subsection{Closed-Form Solution as Learning Tasks on Paired Charts}
As is discussed above, the form of Eq. \ref{eq:equiv_form} basically comes from a re-choice of Riemmanian metric on $\mathcal{N}$ and a reparametrization of $d\gamma(s,t) $ as $\sum_{i=1}^{K}\sum_{j=1}^{K}A^{ij}d\tilde{\mu}_{i}(s)d\tilde{\nu}_{j}(t)$, s.t. $A\in\mathbf{H}(K)$. Although it is almost infeasible to obtain a closed form solution for arbitrary mapping $G$, we are curious of the possibility by imposing non-trivial constraints on the candidate set of G. In our approach, we first propose the following definition.
\begin{defn} [pairwise topological immersion family (PTI-family)] \label{def:pti_family}
	Given topological manifolds $\mathcal{M} = \{(U_i, \varphi_i)\}_{i=1}^{K}$ and $\mathcal{N} = \{(V_i, \psi_i)\}_{i=1}^{K}$, the set of mappings $F_p = \{G:\mathcal{M}\to\mathcal{N}|G(U_i) \subset V_{p(i)}, \forall{i}\in[K]\}$, where $p\in\text{Sym}(K)$ the symmetric group of $[K]$ \cite{cameron1999permutation}, is called pairwise topological immersion mappings indexed by $p$, w.r.t $\mathcal{M}$, $\mathcal{N}$.
\end{defn}

Postponing remarks on this definition (Section \ref{sec:remark_on_def}), a main result of this paper will be provided subsequently, which shows that, we can indeed obtain a meaningful closed form of solution for the inner minimization problem, by constraining the candidate set of $G$ as an arbitrary PTI-family (Def. \ref{def:pti_family}).

\begin{thm} \label{thm:natural_localization} [Natural Localization of Adversarial Loss]
For any $p\in\text{Sym(K)}$, the optimization problem below (compared with Eq. \ref{eq:equiv_form})
\begin{gather} \label{eq:constrained_equiv_form}
	\min_{G\in F_p} \min_{A\in\mathbf{H}(K) } \sum_{i=1}^{K}\sum_{j=1}^{K}\int_{U_i}\int_{V_j}A^{ij} d^{'}_{\mathcal{N}}(G(s),t)d\tilde{\mu}_{i}(s)d\tilde{\nu}_{j}(t)
\end{gather}
is equivalent to 
\begin{gather} \label{eq:final_equiv_form}  %% THIS TERM WILL GUIDE THE REST PART (ON THE LEARNING EFFICIENCY AND ESTIMATION ERROR)
\min_{G\in F_p} \sum_{i=1}^{K}\int_{U_i}\int_{V_{p(i)}}d^{'}_{\mathcal{N}}(G(s),t)d\tilde{\mu}_{i}(s)d\tilde{\nu}_{p(i)}(t)
\end{gather}
In other words, the optimal $A^{*}\in\mathbf{H}(K) $ has the closed form as $
(A^{*})^{ij} = K\delta_{j}^{p(i)} $, where $\delta_{j}^{p(i)}$ is the Kronecker delta function.
\end{thm}
\begin{proof}[Sketch of Proof]
Fix $i, j\in[K]$, s.t. $j\neq{p(i)}$ and arbitrary $G\in{F_p}$. The key step lies in comparing the terms (remember the positiveness of $A^{ij}$)
$$
	T_{\text{non-paired}} = \int_{U_i}\int_{V_j}d^{'}_{\mathcal{N}}(G(s),t)d\tilde{\mu}_{i}(s)d\tilde{\nu}_{j}(t)
$$
and 
$$
	T_{\text{paired}} = \int_{U_i}\int_{V_{p(i)}}d^{'}_{\mathcal{N}}(G(s),t)d\tilde{\mu}_{i}(s)d\tilde{\nu}_{p(i)}(t)
$$
For any $s\in{U_i}$, $G(s)\in{V_{p(i)}}\cap{V_j} = \varnothing$, which leads to $\forall{t\in{V_{p(i)}}, t^{'}\in{V_j}}$, $d_{\mathcal{N}}(G(s), t)\le{d}_{\mathcal{N}}(G(s),t^{'})$. And thus $T_{\text{non-paired}} \ge T_{\text{paired}}$. A rigorous proof can be found in Appendix A.
\end{proof}

\subsection{Discussions \& Interpretations} \label{sec:partial_interp}
The final part of this section is devoted to a discussion on what Definition \ref{def:pti_family} and Theorem \ref{thm:natural_localization} actually mean, together with their significant roles in interpretations for the empirical phenomenons. 
\subsubsection{Discussion with an illustrative example} \label{sec:remark_on_def}
%% I CAN GIVE SOME DIAGRAM HERE FOR BETTER UNDERSTANDING OF WHAT I AM SAYING
Intuitively, we may consider each chart on $\mathcal{M}$, $\mathcal{N}$ as a cluster of images, which has inner-relatedness imposed by $\{\mu_{i}\}_{i=1}^{K}$, $\{\nu_i\}_{i=1}^{K}$. For example, in facial expression translation tasks \cite{choi2017stargan}, $U_i$ contains a set of Bob's poker face, while $V_j$, $V_k$ are respectively sets of Alice's and Bob's face with smile. A PTI-family $F_p$ exactly characterizes the generating tendency of a given generator $G$. Let us come back to the example. Fix $i\neq{j}\neq{k}$. Assume $p, q \in Sym(K)$ with $p(i) = j$ while $q(i) = k$. Thus with the input as an image of Bob's poker face, generators from $F_p$ tends to generate a sample of Alice's smiling face, while those from $F_q$ prefer a sample of smiling Bob. Note that, although it is clear to us the latter behavior is expected, the adversarial learning model itself however hardly has such a knowledge.

It comes to the significance of Theorem \ref{thm:natural_localization}, which is not just an intention to give a closed form for further analysis. More essentially, such a theorem points out the role of $\{F_p\}_{p\in{\text{Sym(K)}}}$ as \textit{attractors} (for attractors in a general sense, see Luenberger's \yrcite{luenberger1979introduction}) during optimization. As we can see, only if the optimizer chooses some generator $G\in{F_p}$ at some epoch, the original optimization problem (Eq. \ref{eq:constrained_equiv_form}) will immediately degenerate to learning tasks on paired charts $\{(U_i, V_{p(i)})\}_{i=1}^{K}$ (Eq. \ref{eq:final_equiv_form}). The generator will thus be trapped in the subset $F_p$ until the end of the training. This theorem can be considered as a support to a recent result called \textit{imaginary adversary}, which points out that in WGAN setting, the minimax game between generator and discriminator can be resolved under some technical conditions \cite{Lei2017AGV}.

%% I AM NOT SURE WHETHER THESE CONTENTS SHOULD BE PRESENTED HERE (OR LATER). NEVERTHELESS, FIRST WRITE IT OUT.
\subsubsection{Partial Interpretations for Empirical Results}
\textbf{Source of Artifacts}
Although it brings sharper results with the adversarial loss, a non-negligible proportion of artifacts is observed in experiments \cite{Isola2017ImagetoImageTW,choi2017stargan}. As a reasonable interpretation, we suggest it is tightly related with what we have discussed above. Since the adversarial learning model itself has no knowledge of the expected pairing relation, or formally the ground-truth $p\in{\text{Sym}(K)}$. Although the choice of $G$ (thus $F_p$) can be guided by the empirical loss during the training phase, it still has a large probability to mistake. Especially when the optimal pairing it observes is different from the expected one, a PTI-family as an attractor will let the choice irrevocable. A clever approach is by imposing oracle as a regularization term with L1-loss (Eq. \ref{eq:geo_L1_loss}), which plays the role as a \textit{rectifier} for choice of $p$. %% Several alternative methods for such an intention have been proposed as well by us.

\textbf{Blurry versus Sharp}
In previous empirical studies, after learning with identity loss (Eq. \ref{eq:geo_L1_loss}) alone, the final generator usually produces more blurry images compared with the generator after learning with the adversarial loss (Eq. \ref{eq:geo_gan_loss}). When both of the losses are optimized w.r.t. the same hypothesis space, the identity loss needs to learn a global mapping $G^{*}:\mathcal{M}\to\mathcal{N}$, while, as a direct result of Theorem \ref{thm:natural_localization}, learning with adversarial loss theoretically only requires to learn independent local mappings $\{f_{i}:U_i \to V_{p(i)}\}_{i=1}^{K}$ first and then gluing them into a global mapping with a well-known theorem from general topology called \textit{partition of unity} \cite{Rudin2010REALAC}. Intuitively, learning local mappings independently requires much smaller capacity of $G$, compared with learning a globally compatible one (a theoretical justification, see Proposition \ref{prop:beau_local}). 
Recently, a model with a similar consideration by \textit{artificially} localizing the adversarial loss to improve the generating quality was proposed \cite{Qi2017GlobalVL}. However, their work mainly targets on image generation (i.e. only the target manifold structure is considered) and stays on an empirical level, with little theoretical analysis for the inherent mechanism. 

As a complement and a step further, we will provide a formal analysis on the benefit of localization detailedly (Section \ref{sec:full_interp}) to complete our interpretations. Due to the indispensable role of the concept of generalization in analyzing model's learning capability \cite{vapnik1998statistical}, we will first present an extended definition of generalization for conditional GAN in the next section.

\section{Generalization for Conditional GAN}
\subsection{Extension from Previous Definition}
As generalization plays a central role in analyzing learning models from a theoretical aspect, there have been previous efforts on proposing specific definitions for GAN considering its difference from conventions. One of these definitions is provided as follows, with our notations.

\begin{defn} \cite{Arora2017GeneralizationAE} [Generalization w.r.t Divergence] \label{def:gen_wrt_dist}
A divergence $D(\bullet,\bullet)$ is said to \textit{generalize} with $m$ training samples and error $\epsilon$ if for the learned distribution $\nu_{\mathcal{N}}$, the following inequality holds with high probability, 
\begin{equation} \label{eq:gen_wrt_dist}
|D(\hat{\nu}_{\text{real}}, \hat{\nu}_{\text{G}}) - D(\nu_{\text{real}}, \nu_{\text{G}})| < \epsilon
\end{equation}
where $\hat{\nu}_{\text{real}}, \hat{\nu}_{\text{G}}$ are empirical versions of real and generated distributions with $\nu_{\text{real}}$ the real distribution as ground-truth.
\end{defn}

Although their work marks the first attempt to study the generalization capability of GAN, such a definition has several potential shortcomings: \textbf{a)} generalization is defined w.r.t specific divergence, instead of the generator itself. From our perspective, it is still the generator that holds the fundamental position in generative tasks. \textbf{b)} lack of the extensibility to conditional GAN, which however plays an increasingly significant role in empirical research and applications. Such a deficiency directly makes it improper to be applied to analyze our target model.

In order to alleviate these possible downsides, we propose an extended version of generalization for both GAN and its deviations with respect to a learned generator.

\begin{defn} [Generalization w.r.t Generator] \label{def:gen_wrt_gen}
Given a divergence $D(\bullet,\bullet)$ and a generator $G:\mathcal{M}\to\mathcal{N}$, we call $G$ \textit{generalizes} with $(m, n)$ training samples respectively from source (or condition) and target distributions and error $\epsilon$ if the following inequality holds with high probability,
\begin{equation} \label{eq:gen_wrt_gen}
D(G(\hat{\mu}_{\mathcal{M}}^m), \nu_{\mathcal{N}}) - D(\hat{\nu}_{\mathcal{N}}^{n}, \nu_{\mathcal{N}}) < \epsilon
\end{equation}
where $\hat{\mu}_{\mathcal{M}}^{m}, \hat{\nu}_{\mathcal{N}}^{n}$ are estimators of source and target distributions, with $\mu_{\mathcal{M}}, \nu_{\mathcal{N}}$ the corresponding ground-truth distributions and $G(\hat{\mu}_{\mathcal{M}}^{m}) \doteq \hat{\mu}_{\mathcal{M}}^{m}\circ{G^{-1}}$, the induced distribution  on $\mathcal{N}$ \cite{chung2001course}.
\end{defn}

Compared with Definition \ref{def:gen_wrt_dist}, our extension explicitly contains the generator as an essential factor for generalization. Furthermore, instead of assuming the source distribution as a gaussian priorly known, we depict it with an empirical estimator from $m$ observed samples. Notice our definition is actually an extension of Definition \ref{def:gen_wrt_dist}, since, by limiting $m$ to infinity and assuming $G$ of sufficient learning capability (in a classical sense), Inequality \ref{eq:gen_wrt_gen} will directly degenerate to Inequality \ref{eq:gen_wrt_dist} in the previous definition.

\subsection{Relations of Generalization in Different Senses}
As an auxiliary theorem for further analysis of our target model in the next section, we will derive the relation of generalization in different senses as well. 

We first specify the divergence $D(\bullet, \bullet)$ in Definition \ref{def:gen_wrt_gen} as Lukaszyk-Karmowski metric \cite{lukaszyk2004new}
\begin{equation} \label{eq:lk_metric}
	D_{\text{LK}}(\nu,\nu^{'}) = \int_{\mathbb{R}^{d}}\int_{\mathbb{R}^{d}} \|x - x^{'}\| d\nu(x)d\nu^{'}(x^{'})
\end{equation}
where $\nu,\nu^{'}$ are arbitrary probability measures supported on $\mathbb{R}^{d}$ (compared with Eq. \ref{eq:final_equiv_form}). Note the Euclidean form above brings convenience for analysis and it actually only requires several technical steps to extend the following result to an intrinsic form (Lemma \ref{lma:norm_equiv}).

\begin{thm} \label{thm:rel_generalization}
Consider generator $G:\mathbb{R}^{d}\to\mathbb{R}^{d}$ satisfying Lipschitz condition with constant $M_{G}$ and $\mu_X,\nu_Y$ are probability measures on $\mathbb{R}^{d}$ respectively with $\{x_{i}\}_{i=1}^{n_X}\overset{\text{i.i.d.}}{\sim}\mu_{X}$ and $\{y_{i}\}_{i=1}^{n_Y}\overset{\text{i.i.d.}}{\sim}\nu_{Y}$. 

Assume the classical generalization bound satisfies the following inequality with probability $1-\delta$
\begin{align} \label{eq:class_gen}
		\mathbb{E}_{x\sim\mu_X, y\sim\nu_Y}{\|G(x) - y\|} - \sum_{i=1}^{n_X}\sum_{j=1}^{n_Y}\frac{\|G(x_i) - y_j\|}{n_Xn_Y} < \epsilon_{\text{classical}} 
\end{align}
where $\epsilon_{\text{classical}} \doteq \epsilon(n_X, n_Y, \mu_X, \nu_Y, \delta)$ the upper bound 
and ERM-principle \cite{vapnik1998statistical} is satisfied with $\eta$ (i.e. $\frac{1}{n_Xn_Y}\sum_{i=1}^{n_X}\sum_{j=1}^{n_Y}\|G(x_i) - y_j\|  < \eta$), then G generalizes with $(n_X, n_Y)$ training samples and error $\epsilon_{\text{adv}}$ with probability $1-\delta$, i.e. 
\begin{equation} 
	D_{LK}(G(\hat{\mu}_{X}^{n_X}), \nu_Y) - D_{LK}(\hat{\nu}_{Y}^{n_Y}, \nu_Y) < \epsilon_{\text{adv}}
\end{equation}
if the following condition is satisfied
\begin{equation} \label{eq:suff_gen_cond}
\epsilon_{\text{classical}}  - \epsilon_{\text{adv}} + \eta < D_{LK}(\nu_Y,\hat{\nu}_{Y}^{n_Y}) - M_{G}D_{LK}(\mu_{X}, \hat{\mu}_{X}^{n_X}) 
\end{equation}
\end{thm}
\begin{proof}
See Appendix A.
\end{proof}
As Theorem \ref{thm:rel_generalization} indicates, unlike the classical generalization bound (especially in VC sense \cite{vapnik1998statistical}), the generalization error in adversarial learning is also affected by the variation of distributions in source and target distributions.

\section{Benefits of Localization and Conditions of Generalization}
By auxiliary of the extended definition of generalization above, we are now able to complete our interpretations for Blurry versus Sharp (Section \ref{sec:full_interp}). As a step further, we will derive a concrete condition (Theorem \ref{thm:generalization_cond}) to control the generalization capability of our target model, which will directly provide practical guidance on model design and dataset construction for practitioners.

We start by specifying some additional statistical settings, only for the sake of concreteness. Recall in Assumption \ref{amp:probability}, we have imposed abstract probability measures $\{\mu_i\}_{i=1}^{K}$, $\{\nu_i\}_{i=1}^{K}$ on $\{\varphi_{i}(U_i)\}_{i=1}^{K}$ and $\{\psi_{i}(V_i)\}_{i=1}^{K}$ respectively. We further specify such an assumption with gaussian settings \textit{locally}.
\begin{amp} 
There exist unknown mean vectors in $\mathbb{R}^{d}$, denoted as $\{x_{i}\}_{i=1}^{K}$, $\{y_{i}\}_{i=1}^{K}$, and known covariance matrices $\Sigma_{\mathcal{M}}, \Sigma_{\mathcal{N}}\in\mathbb{R}^{d\times{d}}$, such that for each $i\in[K]$, $\mu_i = \mathcal{N}(\bullet;x_i, \Sigma_{\mathcal{M}})$, $\nu_i = \mathcal{N}(\bullet;y_i, \Sigma_{\mathcal{N}})$, where $\mathcal{N}(\bullet; x, \Sigma)$ denotes the normal distribution parametrized by $(x, \Sigma)$. Additionally, we set the sample sizes on charts $\{U_i\}_{i=1}^{K}$, $\{V_i\}_{i=1}^{K}$ equally as $m$, $n$, without loss of generality.
\end{amp}

It ought to be noticed that our gaussian assumption above will not impose much limitation on our discussion, mainly because its influence remains local (compared with original GAN \cite{Goodfellow2014GenerativeAN}) and each gaussian is partially unknown (compared with Arora et al. \yrcite{Arora2017GeneralizationAE}).

\subsection{Benefits of Localization} \label{sec:full_interp}
In our previous interpretation for Blurry versus Sharp (Section \ref{sec:partial_interp}), a claim has remained unjustified that learning a set of local mappings is much easier compared with learning a globally compatible one. With the following observations: \textbf{a)}  Lipschitz condition can be always satisfied with techniques during training phase \cite{Arjovsky2017WassersteinGA}. \textbf{b)} $\epsilon_{\text{classical}}, \eta, M_G$ remain constant for the same hypothesis space. \textbf{c)} The target-related term $D_{LK}(\nu, \hat{\nu})$ is identical in local and global task when the pairing relation is unobserved,  
we reformulate Inequality \ref{eq:suff_gen_cond} as 
\begin{equation} \label{eq:suff}
C + \lambda{D}_{LK}(\mu,\hat{\mu}) < \epsilon_{\text{adv}}
\end{equation}
where $C \doteq \epsilon_{\text{classical}} + \eta - D_{LK}(\nu, \hat{\nu})$ a constant and $\lambda \doteq{M_G} > 0 $.

By denoting probability measures underlying the global task as $\mu_X = \frac{1}{K}\sum_{i=1}^{K}\mu_i$ and $\nu_Y = \frac{1}{K}\sum_{i=1}^{K}\nu_i$ in Euclidean sense, it is sufficient to compare the two terms below to justify our previous claim. 
\begin{equation}
		\epsilon^{local}_{adv} = \frac{1}{K}\sum_{i=1}^{K}{D}_{LK}(\mu_{i},\hat{\mu}_{i}^{m})
\end{equation}
\begin{align}
	\epsilon^{global}_{adv} ={D}_{LK}(\frac{1}{K}\sum_{i=1}^{K}\mu_i, \hat{\mu}_{X}^{Km})
\end{align}
Intuitively, the term $\epsilon^{local}_{adv}$ represents the average generalization errors for all the local tasks ($\mu_i\to\nu_i$, $\forall{i}\in[K]$), while $\epsilon^{global}_{adv}$ can be interpreted as the generalization error when the learning process is carried out globally ($\mu_X\to\nu_{Y}$). Note, for convenience, we have set the pairing relation $e\in{Sym(K)}$ as $e(i) = i, \forall{i}\in[K]$ (the corresponding PTI-family denoted as $F_e$). With the following proposition, we have eventually completed our unfinished interpretations for the empirical results.

\begin{prop} \label{prop:beau_local}
In the settings above, we always have
$$
	\epsilon^{local}_{adv} < \epsilon^{global}_{adv}
$$
\end{prop}
\begin{proof} See Appendix A. Intuitively, let us consider an extremal situation when $\mu_i = \delta_{x_i}$, $\nu_i = \delta_{y_i}$ and $m\to{\infty}$ ($\delta_x$ is the Dirac function as a distribution) for each $i\in{[K]}$. Thus $\epsilon^{local}_{adv} = 0$, while $\epsilon^{global}_{adv} \ge \sup_{i,j\in[K]}\|x_i - x_j\| > 0$.
\end{proof}

\subsection{Conditions of Generalization} \label{sec:guidance}
As a step further, we would like to derive some technical conditions under which the target model will generalize well, in the sense of adversarial learning (Definition \ref{def:gen_wrt_gen}).

In order to apply Theorem \ref{thm:rel_generalization}, we introduce the following lemma which points out the equivalence between L-K metric (Eq. \ref{eq:lk_metric}) with assumed probability measures in Euclidean space and each local objective defined on manifolds.

\begin{lma} \label{lma:norm_equiv}
$\forall{i}\in[K]$, consider a measureable mapping $\tilde{f}:U_i\to{V_i}$ with $f \doteq \psi_{i}\circ{\tilde{f}}\circ\varphi_{i}^{-1}$ satisfies Lipschitz condition, then $\int_{U_i}\int_{V_i}d^{'}_{\mathcal{N}}(\tilde{f}(s),t)d\tilde{\mu}_{i}(s)d\tilde{\nu}_{i}(t) \simeq D_{LK}(f(\mu_i), \nu_i)$, i.e. there exists constants $0< C_l < C_u < \infty$ such that 
\begin{equation}
	C_l < \frac{\int_{U_i}\int_{V_i}d^{'}_{\mathcal{N}}(\tilde{f}(s),t)d\tilde{\mu}_{i}(s)d\tilde{\nu}_{i}(t)}{D_{LK}(f(\mu_i), \nu_i)} < C_u
 \end{equation}
\end{lma}
\begin{proof} [Sketch of Proof]
The key observation lies in, with the measureability of $\tilde{f}$ and smoothness of $\varphi_i, \psi_i$, the induced mapping $\tilde{\nu}^{'} = \frac{1}{\tilde{E}}\tilde{f}(\mu_i)$ and $\nu^{'} = \frac{1}{E}(\psi_i\circ\tilde{f})(\mu_i)$ are also probability measures respectively on $\tilde{f}(U_i)\subset{V_i}$ and $(\psi_i\circ\tilde{f})(U_i)\subset{\psi_i(V_i)}$ (with $\tilde{E},E$ some normalizing factor), with bounded range a.e., which basically comes from the Lipschitz condition. We also use the assumption $\tilde{f}(U_i) \subset V_i$ and $tr(\Sigma_\mathcal{N}) < \infty$. For the rest of the proof, see Appendix A.
\end{proof}

We are now able to instantiate the generic inequality on generalization in the form of the following theorem. Note we depict the classical generalization error term in VC sense and study the condition for $\epsilon_{adv} = 0$, which means the generated distribution is even better than an estimated target distribution from $m$ real samples. 
\begin{thm} \label{thm:generalization_cond}
Under the assumptions above, consider a generator $G \in F_e$ and a hypothesis space $\mathcal{H}$ with VC-dimension bound by constant $\Lambda$. Assume for each $i\in[K]$, the restriction of G to a pair of charts $f_i \doteq G_{\downarrow{(U_i, V_i)}}\in\mathcal{H}$ with $\psi_{i}\circ{G}\circ\varphi_{i}^{-1}$ satisfies Lipschitz condition with constant $M_G$, then G generalizes globally with $(Kn, Km)$ samples only if the following inequality is satisfied with probability $1- C(\epsilon, \Lambda)(nm\epsilon^2)^{\tau(\Lambda)}e^{-nm\alpha\epsilon^{2}}$,  
\begin{gather} 
	\epsilon + \frac{1}{nm}\max\{\sum_{i=1}^{n}\sum_{j=1}^{m}d_\mathcal{N}(G(s_{k}^{i}), t_{k}^{j})\}_{k=1}^{K} < \nonumber \\
   \label{eq:generalization_cond}
   	\frac{1}{\sqrt{m}}\sqrt{\text{tr}(\Sigma_\mathcal{N})} + 2\text{tr}(\Sigma_\mathcal{N}) - M_G(\frac{1}{\sqrt{n}}\sqrt{\text{tr}(\Sigma_\mathcal{M})} + 2\text{tr}(\Sigma_\mathcal{M}))
\end{gather}
where $C(\epsilon, \Lambda)$ and $\tau(\Lambda)$ are positive functions independent from $n,m$ and $\alpha\in[1,2]$ a constant.   
\end{thm}
\begin{proof} See Appendix A. Besides Theorem \ref{thm:rel_generalization} \& Lemma \ref{lma:norm_equiv}, we have applied a general form of Vapnik-Chervonenkis theorem \cite{Vayatis1999DistributionDependentVB} for worst case analysis and a non-asymptotic theorem from information geometry as follows,
\begin{thm} \cite{amari2007methods}
The mean square error of a biased-corrected first-order efficient estimator $\hat{u}$ to $\mu$ is given by the expansion (with N observed samples):
$$
	\mathbb{E}[(\hat{u}^{a} - u^{a})(\hat{u}^{b} - u^{b})] = \frac{1}{N}g^{ab} + O(\frac{1}{N^2})
$$
where $g^{ab}$ denotes the Fisher metric on the statistical manifold underlying a parametrized family of probability. 
\end{thm}

\iffalse
\begin{comment}
and an auxiliary lemma for attaining a closed form expression of L-K metric for gaussians as follows,
\begin{lma}
Given $\mu = \mathcal{N}(\bullet; x, \Sigma_X), \nu = \mathcal{N}(\bullet; y, \Sigma_Y)$, then 
\begin{equation}
D_{LK}(\mu,\nu) = \|x - y\| + tr(\Sigma_X + \Sigma_Y) 
\end{equation}
\end{lma}
\end{comment}
\fi
\end{proof} 

\textbf{Discussions \& Guidance for Practitioners}
A brief discussion on Theorem \ref{thm:generalization_cond} and its possible guidance on practice will serve as the last topic. As we can see, generalization happens with a higher probability when the right side of Inequality \ref{eq:generalization_cond} yields larger and the left side becomes smaller. The former situation corresponds to a smaller variance of each local source distribution, especially when $M_G$ the Lipschitz constant lets the $\text{tr}(\Sigma_\mathcal{M})$ term dominate. The latter situation corresponds to a \textit{uniformly} lower empirical risk. As each local chart has an intuitive interpretation as a set of related images, it is reasonable to make the following suggestions on dataset construction and model design.

\begin{itemize}
\item The source set of images should be of lower inner-similarity, i.e. a set of N different individuals' poker face will give a better generator rather than a set of N different photos of the same person's poker face.
\item A blind increase in total number of images will hardly help generalization, while the balancedness in numbers of different objects is what actually matters.
\item Classical generalization capacity \cite{vapnik1998statistical} and smoothness of learning model w.r.t. data manifolds \cite{Belkin2006ManifoldRA} should be considered equivalently important in model design for such tasks.
\end{itemize}

\section{Conclusion and Further Directions} \label{sec:conclude}
In this paper, we have focused on providing a solid theoretical interpretations for some critical but unclear empirical phenomenons reported in Isola et. al \yrcite{Isola2017ImagetoImageTW}. Via reformulating Isola's model within a brand-new geometrical framework (Section \ref{sec:geom_formulate}), we have proved that the target model has a natural localized form as independent learning tasks on paired charts (Theorem \ref{thm:natural_localization}), which directly provides a candidate interpretation for their experimental results (Section \ref{sec:partial_interp}). Furthermore, with our extension of the generalization concept for GAN to conditional GAN case (Definition \ref{def:gen_wrt_gen}), we have successfully described the inherent mechanism of the target model in a full picture (Section \ref{sec:full_interp}). Our derived generalization condition (Theorem \ref{thm:generalization_cond}) also provides constructive guidance for further empirical studies (Section \ref{sec:guidance}). 

Actually, our theoretical results can be easily decoupled from the image-to-image translation setting to a much general case, that is, learning translation from a source manifold structure to a target one via adversarial learning. Further directions in applications, such as applying our theoretical results for improving the current models or devising new architectures for better generating and translation performance, are potentially fruitful. For theorists, our framework for analysis can be considered as an attempt to understand the far-more complicated mechanism behind adversarial learning models in a specific context. More exciting theoretical results based on our theoretical framework awaits further dedications.

\newpage

\newpage

%% HERE END MY MODULES FOR THIS PAPER
\section*{Acknowledgements}
  The authors would like to thank the anonymous referees for
  their valuable comments and helpful suggestions. This work is funded by the National Program on Key Basic Research (NO. 2015CB358800).

%% DON'T TOUCH THE STUFF BELOW
\bibliography{main}

\begin{thebibliography}{36}
\providecommand{\natexlab}[1]{#1}
\providecommand{\url}[1]{\texttt{#1}}
\expandafter\ifx\csname urlstyle\endcsname\relax
  \providecommand{\doi}[1]{doi: #1}\else
  \providecommand{\doi}{doi: \begingroup \urlstyle{rm}\Url}\fi

\bibitem[Amari \& Nagaoka(2007)Amari and Nagaoka]{amari2007methods}
Amari, S.-i. and Nagaoka, H.
\newblock \emph{Methods of information geometry}, volume 191.
\newblock American Mathematical Soc., 2007.

\bibitem[Arjovsky \& Bottou(2017)Arjovsky and Bottou]{Arjovsky2017TowardsPM}
Arjovsky, M. and Bottou, L.
\newblock Towards principled methods for training generative adversarial
  networks.
\newblock \emph{CoRR}, abs/1701.04862, 2017.

\bibitem[Arjovsky et~al.(2017)Arjovsky, Chintala, and
  Bottou]{Arjovsky2017WassersteinGA}
Arjovsky, M., Chintala, S., and Bottou, L.
\newblock Wasserstein generative adversarial networks.
\newblock In \emph{ICML}, 2017.

\bibitem[Arora et~al.(2017)Arora, Ge, Liang, Ma, and
  Zhang]{Arora2017GeneralizationAE}
Arora, S., Ge, R., Liang, Y., Ma, T., and Zhang, Y.
\newblock Generalization and equilibrium in generative adversarial nets (gans).
\newblock In \emph{ICML}, 2017.

\bibitem[Aumann(1989)]{aumann1989game}
Aumann, R.~J.
\newblock Game theory.
\newblock In \emph{Game Theory}, pp.\  1--53. Springer, 1989.

\bibitem[Belkin et~al.(2006)Belkin, Niyogi, and
  Sindhwani]{Belkin2006ManifoldRA}
Belkin, M., Niyogi, P., and Sindhwani, V.
\newblock Manifold regularization: A geometric framework for learning from
  labeled and unlabeled examples.
\newblock \emph{Journal of Machine Learning Research}, 7:\penalty0 2399--2434,
  2006.

\bibitem[Boyd \& Vandenberghe(2004)Boyd and Vandenberghe]{boyd2004convex}
Boyd, S. and Vandenberghe, L.
\newblock \emph{Convex optimization}.
\newblock Cambridge university press, 2004.

\bibitem[Cameron(1999)]{cameron1999permutation}
Cameron, P.~J.
\newblock \emph{Permutation groups}, volume~45.
\newblock Cambridge University Press, 1999.

\bibitem[Choi et~al.(2017)Choi, Choi, Kim, Ha, Kim, and Choo]{choi2017stargan}
Choi, Y., Choi, M., Kim, M., Ha, J.-W., Kim, S., and Choo, J.
\newblock Stargan: Unified generative adversarial networks for multi-domain
  image-to-image translation.
\newblock \emph{arXiv preprint arXiv:1711.09020}, 2017.

\bibitem[Chung(2001)]{chung2001course}
Chung, K.~L.
\newblock \emph{A course in probability theory}.
\newblock Academic press, 2001.

\bibitem[Dai et~al.(2008)Dai, Chen, Xue, Yang, and Yu]{Dai2008TranslatedLT}
Dai, W., Chen, Y., Xue, G.-R., Yang, Q., and Yu, Y.
\newblock Translated learning: Transfer learning across different feature
  spaces.
\newblock In \emph{NIPS}, 2008.

\bibitem[Denton et~al.(2015)Denton, Chintala, Szlam, and
  Fergus]{Denton2015DeepGI}
Denton, E.~L., Chintala, S., Szlam, A., and Fergus, R.
\newblock Deep generative image models using a laplacian pyramid of adversarial
  networks.
\newblock In \emph{NIPS}, 2015.

\bibitem[Gatys et~al.(2016)Gatys, Ecker, and Bethge]{Gatys2016ImageST}
Gatys, L.~A., Ecker, A.~S., and Bethge, M.
\newblock Image style transfer using convolutional neural networks.
\newblock \emph{2016 IEEE Conference on Computer Vision and Pattern Recognition
  (CVPR)}, pp.\  2414--2423, 2016.

\bibitem[Goodfellow et~al.(2014)Goodfellow, Pouget-Abadie, Mirza, Xu,
  Warde-Farley, Ozair, Courville, and Bengio]{Goodfellow2014GenerativeAN}
Goodfellow, I.~J., Pouget-Abadie, J., Mirza, M., Xu, B., Warde-Farley, D.,
  Ozair, S., Courville, A.~C., and Bengio, Y.
\newblock Generative adversarial nets.
\newblock In \emph{NIPS}, 2014.

\bibitem[Isola et~al.(2017)Isola, Zhu, Zhou, and
  Efros]{Isola2017ImagetoImageTW}
Isola, P., Zhu, J.-Y., Zhou, T., and Efros, A.~A.
\newblock Image-to-image translation with conditional adversarial networks.
\newblock \emph{2017 IEEE Conference on Computer Vision and Pattern Recognition
  (CVPR)}, pp.\  5967--5976, 2017.

\bibitem[Jost(2008)]{jost2008riemannian}
Jost, J.
\newblock \emph{Riemannian geometry and geometric analysis}.
\newblock Springer Science \& Business Media, 2008.

\bibitem[Kataoka et~al.(2017)Kataoka, Matsubara, and
  Uehara]{Kataoka2017AutomaticMC}
Kataoka, Y., Matsubara, T., and Uehara, K.
\newblock Automatic manga colorization with color style by generative
  adversarial nets.
\newblock \emph{2017 18th IEEE/ACIS International Conference on Software
  Engineering, Artificial Intelligence, Networking and Parallel/Distributed
  Computing (SNPD)}, pp.\  495--499, 2017.

\bibitem[Lee(2010)]{lee2010introduction}
Lee, J.
\newblock \emph{Introduction to topological manifolds}, volume 940.
\newblock Springer Science \& Business Media, 2010.

\bibitem[Lei et~al.(2017)Lei, Su, Cui, Yau, and Gu]{Lei2017AGV}
Lei, N., Su, K., Cui, L., Yau, S.-T., and Gu, X.
\newblock A geometric view of optimal transportation and generative model.
\newblock \emph{CoRR}, abs/1710.05488, 2017.

\bibitem[Lu et~al.(1998)Lu, Fainman, and Hecht-Nielsen]{Lu1998ImageM}
Lu, H., Fainman, Y., and Hecht-Nielsen, R.
\newblock Image manifolds.
\newblock 1998.

\bibitem[Luenberger(1979)]{luenberger1979introduction}
Luenberger, D.~G.
\newblock \emph{Introduction to dynamic systems: theory, models, and
  applications}, volume~1.
\newblock Wiley New York, 1979.

\bibitem[{\L}ukaszyk(2004)]{lukaszyk2004new}
{\L}ukaszyk, S.
\newblock A new concept of probability metric and its applications in
  approximation of scattered data sets.
\newblock \emph{Computational Mechanics}, 33\penalty0 (4):\penalty0 299--304,
  2004.

\bibitem[Mirza \& Osindero(2014)Mirza and Osindero]{Mirza2014ConditionalGA}
Mirza, M. and Osindero, S.
\newblock Conditional generative adversarial nets.
\newblock \emph{CoRR}, abs/1411.1784, 2014.

\bibitem[Qi et~al.(2017)Qi, Zhang, and Hu]{Qi2017GlobalVL}
Qi, G.-J., Zhang, L., and Hu, H.
\newblock Global versus localized generative adversarial nets.
\newblock \emph{CoRR}, abs/1711.06020, 2017.

\bibitem[Reinhard et~al.(2001)Reinhard, Ashikhmin, Gooch, and
  Shirley]{Reinhard2001ColorTB}
Reinhard, E., Ashikhmin, M., Gooch, B., and Shirley, P.
\newblock Color transfer between images.
\newblock \emph{IEEE Computer Graphics and Applications}, 21:\penalty0 34--41,
  2001.

\bibitem[Rudin(2010)]{Rudin2010REALAC}
Rudin, W.
\newblock Real and complex analysis real and complex analysis third edition.
\newblock 2010.

\bibitem[Saito et~al.(2018)Saito, Takamichi, and
  Saruwatari]{Saito2018StatisticalPS}
Saito, Y., Takamichi, S., and Saruwatari, H.
\newblock Statistical parametric speech synthesis incorporating generative
  adversarial networks.
\newblock \emph{IEEE/ACM Transactions on Audio, Speech, and Language
  Processing}, 26:\penalty0 84--96, 2018.

\bibitem[Sontag(1998)]{Sontag1998VCDO}
Sontag, E.~D.
\newblock Vc dimension of neural networks.
\newblock \emph{NATO ASI Series F Computer and Systems Sciences}, 168:\penalty0
  69--96, 1998.

\bibitem[Vapnik \& Vapnik(1998)Vapnik and Vapnik]{vapnik1998statistical}
Vapnik, V.~N. and Vapnik, V.
\newblock \emph{Statistical learning theory}, volume~1.
\newblock Wiley New York, 1998.

\bibitem[Vayatis \& Azencott(1999)Vayatis and
  Azencott]{Vayatis1999DistributionDependentVB}
Vayatis, N. and Azencott, R.
\newblock Distribution-dependent vapnik-chervonenkis bounds.
\newblock In \emph{EuroCOLT}, 1999.

\bibitem[Villani(2008)]{Villani2010OptimalT}
Villani, C.
\newblock \emph{Optimal transport: old and new}, volume 338.
\newblock Springer Science \& Business Media, 2008.

\bibitem[Wu et~al.(2016)Wu, Zhang, Xue, Freeman, and
  Tenenbaum]{Wu2016LearningAP}
Wu, J., Zhang, C., Xue, T., Freeman, B., and Tenenbaum, J.~B.
\newblock Learning a probabilistic latent space of object shapes via 3d
  generative-adversarial modeling.
\newblock In \emph{NIPS}, 2016.

\bibitem[Zeiler et~al.(2011)Zeiler, Taylor, Sigal, Matthews, and
  Fergus]{Zeiler2011FacialET}
Zeiler, M.~D., Taylor, G.~W., Sigal, L., Matthews, I.~A., and Fergus, R.
\newblock Facial expression transfer with input-output temporal restricted
  boltzmann machines.
\newblock In \emph{NIPS}, 2011.

\bibitem[Zhang et~al.(2017)Zhang, Gan, Fan, Chen, Henao, Shen, and
  Carin]{Zhang2017AdversarialFM}
Zhang, Y., Gan, Z., Fan, K., Chen, Z., Henao, R., Shen, D., and Carin, L.
\newblock Adversarial feature matching for text generation.
\newblock In \emph{ICML}, 2017.

\bibitem[Zheng et~al.(2017)Zheng, Zheng, Yu, Gu, and
  Zheng]{Zheng2017PhototoCaricatureTO}
Zheng, Z., Zheng, H., Yu, Z., Gu, Z., and Zheng, B.
\newblock Photo-to-caricature translation on faces in the wild.
\newblock \emph{CoRR}, abs/1711.10735, 2017.

\bibitem[Zhu et~al.(2016)Zhu, Kr{\"a}henb{\"{u}}hl, Shechtman, and
  Efros]{Zhu2016GenerativeVM}
Zhu, J.-Y., Kr{\"a}henb{\"{u}}hl, P., Shechtman, E., and Efros, A.~A.
\newblock Generative visual manipulation on the natural image manifold.
\newblock In \emph{ECCV}, 2016.

\end{thebibliography}
\bibliographystyle{icml2018}

\end{document}

% --- supplement: z_appendix.tex ---

\maketitle
Detailed proofs for all the theorems, lemmas and propositions omitted from our paper will be given here in a rigorous form. We provide them as a supplementary because we would like the audience of our paper to focus  more on the development of our theory, and limit of space. Our proofs are mainly based on the texts of Chung \cite{chung}, Rudin \cite{rudin} and Nakahara \cite{nakahara}.
\section{Introduction}
[No Theorems or Lemmas]

\section{Preliminaries}

%%% FIRST LEMMA
\begin{lma} \label{lma:pm_over_mfd}
Given a smooth manifold $\mathcal{M} = \{(U_i,  \varphi_i)\}_{i=1}^{K}$ with pairwise disjointness and $\{\mu_k\}_{k=1}^{K}$ as the probability measures supported on $\{\varphi_i(U_i)\}_{i=1}^{K}$ correspondingly, a function $\mu_{\mathcal{M}}:\mathcal{B}(\mathcal{M})\to[0,1]$ is defined by
\begin{equation} \label{eq:pm_over_mfd}
	 d\mu_{\mathcal{M}}(s) = \frac{1}{K}\sum_{i=1}^{K}\mathbf{1}_{s\in{U_i}}d\mu_{i}\circ\varphi_i(s)
\end{equation}
Then $\mu_{\mathcal{M}}$ is a probability measure defined on $\mathcal{M}$. 
\end{lma}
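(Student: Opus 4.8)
The plan is to read the differential identity \eqref{eq:pm_over_mfd} as the assertion that, for every Borel set $A\in\mathcal{B}(\mathcal{M})$,
\begin{equation*}
\mu_{\mathcal{M}}(A)=\frac{1}{K}\sum_{i=1}^{K}\mu_i\big(\varphi_i(A\cap U_i)\big),
\end{equation*}
i.e.\ each summand is the pushforward (equivalently, the $\varphi_i$-pullback) of $\mu_i$ to the chart domain $U_i$, extended by zero off $U_i$. First I would check that this formula is well posed: since each $\varphi_i$ is a diffeomorphism of the open set $U_i$ onto $\varphi_i(U_i)\subseteq\mathbb{R}^n$, it is in particular a Borel isomorphism, and because $U_i$ is open in $\mathcal{M}$ one has $\mathcal{B}(U_i)=\{A\cap U_i:A\in\mathcal{B}(\mathcal{M})\}\subseteq\mathcal{B}(\mathcal{M})$; hence $\varphi_i(A\cap U_i)\in\mathcal{B}(\varphi_i(U_i))$ and $\mu_i\big(\varphi_i(A\cap U_i)\big)$ is defined for every $A\in\mathcal{B}(\mathcal{M})$.

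Next I would verify the three defining properties of a probability measure. Non-negativity and $\mu_{\mathcal{M}}(\emptyset)=0$ are immediate from the corresponding properties of the $\mu_i$ together with $\varphi_i(\emptyset)=\emptyset$. For countable additivity, let $\{A_j\}_{j\ge 1}$ be pairwise disjoint Borel subsets of $\mathcal{M}$; then for each fixed $i$ the sets $\{A_j\cap U_i\}_j$ are pairwise disjoint, and since $\varphi_i$ is injective so are their images, whence
\begin{equation*}
\mu_i\Big(\varphi_i\big(\big(\textstyle\bigcup_j A_j\big)\cap U_i\big)\Big)=\mu_i\Big(\textstyle\bigcup_j\varphi_i(A_j\cap U_i)\Big)=\sum_j\mu_i\big(\varphi_i(A_j\cap U_i)\big)
\end{equation*}
by $\sigma$-additivity of $\mu_i$. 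Summing over the finite index set $i=1,\dots,K$ with weights $1/K$ and interchanging the finite sum over $i$ with the countable sum over $j$ (legitimate since every term is non-negative, or simply because one of the sums is finite) yields $\mu_{\mathcal{M}}\big(\bigcup_j A_j\big)=\sum_j\mu_{\mathcal{M}}(A_j)$.

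Finally I would compute the total mass. Because the atlas is pairwise disjoint and, being an atlas, covers $\mathcal{M}$, the family $\{U_i\}_{i=1}^{K}$ is a finite partition of $\mathcal{M}$, so $\mathcal{M}\cap U_i=U_i$ and the indicator in \eqref{eq:pm_over_mfd} selects at each point $s$ the unique chart containing it. Hence
\begin{equation*}
\mu_{\mathcal{M}}(\mathcal{M})=\frac{1}{K}\sum_{i=1}^{K}\mu_i\big(\varphi_i(U_i)\big)=\frac{1}{K}\sum_{i=1}^{K}1=1,
\end{equation*}
using that $\mu_i$ is a probability measure supported on $\varphi_i(U_i)$. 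Combined with the previous step this shows $\mu_{\mathcal{M}}$ is a probability measure on $(\mathcal{M},\mathcal{B}(\mathcal{M}))$.

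I expect the only genuine subtlety — rather than an obstacle — to be the measurability bookkeeping in the first step: one must justify that pushing a Borel set of $\mathcal{M}$ through the chart maps lands in the Borel $\sigma$-algebra on which $\mu_i$ lives, and that $\mathcal{B}(\mathcal{M})|_{U_i}$ coincides with $\mathcal{B}(U_i)$. Once the set-level formula is known to be well defined, everything else is a finite sum and reduces to the measure axioms for the individual $\mu_i$ together with the observation that $\{U_i\}$ is a partition; the pairwise-disjointness hypothesis is exactly what makes the normalizing constant $1/K$ correct.
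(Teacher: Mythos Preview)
Your proposal is correct and follows essentially the same route as the paper: establish countable additivity of each $\mu_i\circ\varphi_i$ via injectivity of the chart map $\varphi_i$ (so images of disjoint Borel sets stay disjoint), then verify normalization using $\mu_i(\varphi_i(U_i))=1$ and the finite average over $i$. Your treatment is in fact more careful than the paper's, which omits the measurability bookkeeping (that $\varphi_i(A\cap U_i)$ is Borel and that $\mathcal{B}(\mathcal{M})|_{U_i}=\mathcal{B}(U_i)$) and works directly in differential/integral notation for the total-mass computation.
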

\begin{proof}
First claim $\mu_{i}\circ\varphi_i$ is a measure, which comes from the easy observation that for each $i \in [K]$, and countably many disjoint sets $\{A_{n}\}_{n=1}^{\infty} \subset \mathcal{B}(\mathcal{M})$, the Borel sets constructed over $\mathcal{M}$ as a topological space.   
$$
	\varphi_{i}(\cup_{n=1}^{\infty}A_{n}) = \cup_{n=1}^{\infty}\varphi_{i}(A_n)
$$

and since $\varphi_i$ itself is a homeomorphism, which indicates the one-to-one property, we have the disjointness of sets $\{\varphi_i(A_{n})\}_{n=1}^{\infty}$.

Thus from the assumption that $\mu_i$ is a probability measure, the countable additivity of $\mu_{i}\circ\varphi_i$ on $\mathcal{B}(\mathcal{M})$ is thus proved as

$$
	\mu_{i}\circ\varphi_i(\cup_{n=1}^{\infty}A_{n}) = \sum_{i=1}^{\infty} \mu_i(\varphi_{i}(A_n))
$$
, which directly leads to the assertion that $\mu_{i}\circ\varphi_i$ is a measure.

Next, we would like to prove it is indeed a probability measure, which needs to prove the normalization condition.

We directly take integral over the manifold $\mathcal{M}$ with the derivative form of measure $\mu_{\mathcal{M}}$, as is defined.  

\begin{gather}
\int_{\mathcal{M}}d\mu_{\mathcal{M}} \\
\overset{\text{substitute}}{=} \frac{1}{K}\int_{\mathcal{M}}\sum_{i=1}^{K}\mathbf{1}_{s\in{U_i}}d\mu_{i}\circ\varphi_i(s) \\
\overset{\text{exchange}}{=}\frac{1}{K}\sum_{i=1}^{K}\int_{U_i}d\mu_{i}\circ\varphi_i(s) \\
\overset{\text{change of variable}} = \frac{1}{K}\sum_{i=1}^{K}\int_{\varphi_i(U_i)}d\mu_{i}(s) \\
= 1 
\end{gather}

Thus we have checked the normalization condition, which in turn proves the lemma.
\end{proof}

\section{Natural Localization of cWGAN-Loss}

%%% LEMMA FOR ABSORPTION
\begin{lma} \label{lma:absorption}
Consider Riemmanian manifold $(\mathcal{N}, \tau)$ with curvature locally bounded above and below, $\tau \in C^{\infty}$ and its induced distance function denoted as $d_\mathcal{N}$, then for any path-independent function $f:\mathcal{N}\times{\mathcal{N}}\to\mathbb{R}^{+}\cup\{0\}$, there exists a Riemmanian metric $\tau^{'}$ on $\mathcal{N}$,  induced by the distance function
$$
d_{\mathcal{N}}^{'}(x,y) = f(x,y)d_\mathcal{N}(x,y)\quad{}\forall{x,y}\in{\mathcal{N}}
$$
\end{lma}
\begin{proof}
The proof is mainly based on a previous result in \cite{nikolaev1983smoothness, nikolaev1999metric}, which asserts certain sufficient conditions for a synthetic distance function $d_{\mathcal{N}}^{'}:\mathcal{N}\times{\mathcal{N}}\to\mathbb{R}^{+}\cup\{0\}$ on Riemmanian manifold $(\mathcal{N}, \tau)$ to be compatible with some Riemmanian metric on $\mathcal{N}$. That is, besides the conditions innate to the manifold
\begin{itemize}
\item curvature locally bounded above and below, i.e. $\forall{s\in\mathcal{N}}, \exists{c_1, c_2}$,  $0< c_1 < c_2 < \infty$ and $c_1 < \Gamma_{ij}^{k}(s) <c_2$.
\item $\tau \in C^{\infty}$, which means it is infinitely differentiable locally. In fact, the assumption can be relaxed to $\tau \in C^{1, \alpha}$, for any $\alpha > 0$.
\end{itemize}, 
the condition imposed on the synthetic distance function is
\begin{itemize}
\item $d^{'}_{\mathcal{N}}$ is a \textit{path-metric}, i.e. $\forall{s_1, s_2}\in\mathcal{N}$, consider the set of paths connecting $s_1, s_2$, that is, the set of curves $\mathcal{P}_{s_1\to{s_2}} = \{p:[0,1]\to\mathcal{M}| p(0)= s_1, p(1) = s_2\}$, there exists an functional $L: \mathcal{P}_{s_1\to{s_2}} \to \mathbb{R}^{+}\cup\{0\}$, s.t. 

$$
	d_{\mathcal{N}}^{'}(s_1, s_2) = \inf_{p\in\mathcal{P}_{s_1\to{s_2}}} L(p)
$$

Thus let us turn back to our case, the synthetic distance function is actually expanded from an existing distance function on $\mathcal{N}$, induced by Riemmanian metric $\tau$. As is well known, the induced distance $d_\mathcal{N}$ itself has the form

$$
	d_\mathcal{N}(s_1, s_2) = \inf_{p\in\mathcal{P}_{s_1\to{s_2}}} L(p)
$$

where L is called the length of curve $p$, defined as 

$$	
	L(p) \doteq \int_{0}^{1}\sqrt{\sum_{i,j}g_{ij}(p(t))\frac{\partial{x^{i}}}{\partial{t}}\frac{\partial{x^{j}}}{\partial{t}}}dt
$$

Since from the assumption that $f(\bullet, \bullet)$ is path-independent, we are able to define the following functional $L_f$ (easy to check its well-definedness),

$$
	L_f(p) \doteq f(p(0), p(1))\text{   }\forall{p\in\mathcal{P}_{s_1\to{s_2}}}
$$

Thus it is obvious that, by constructing $L^{'}$ as 

$$
	L^{'}(p) = L_f(p)\bullet{L(p)}\text{   }\forall{p\in\mathcal{P}_{s_1\to{s_2}}}
$$
, our synthetic distance function $d_\mathcal{N}^{'}$ is a path metric thus induced from some Riemmian metric on $\mathcal{N}$, which finishes our proof.
\end{itemize}
\end{proof}

%% RENORMALIZATION
\subsection{Omitted Steps for Renormalization to Obtain Eq. 14}
After we rearrange $d_\mathcal{N}(G(s), t)d\gamma$ as $d_\mathcal{N}^{'}(G(s), t)d\gamma{'}$, the boundary condition $\int_{\mathcal{M}}\int_{\mathcal{N}}d\gamma^{'}=1$ requires renormalization. By introducing an additional matrix $A\in\mathbf{H}(K)$ s.t. $\mathbf{H}{(K)}\doteq\{A\in\mathbb{R}^{K\times{K}} | \forall{j}\in{K},\sum_{i}A^{ij} = K$; $\forall{i,j}\in[K], A^{ij} \ge 0\}$, the cWGAN-loss $\min_G\mathcal{L}^{'}_{adv}(G)$ can be reformulated as
\begin{gather} \label{eq:equiv_form}
	\min_{G} \min_{A\in \mathbf{H}(K)} \sum_{i=1}^{K}\sum_{j=1}^{K}\int_{U_i}\int_{V_j}A^{ij} d^{'}_{\mathcal{N}}(G(s),t)d\tilde{\mu}_{i}(s)d\tilde{\nu}_{j}(t)
\end{gather}
\begin{proof}
We start from the form, 
\begin{align} \label{eq:interm_form}
\min_{G}\underset{\gamma\in{\Pi(\mu_\mathcal{M}, \nu_\mathcal{N})}}{\inf}\sum_{i=1}^{K}\sum_{j=1}^{K}\int_{U_i}\int_{V_j} d_{\mathcal{N}}(G(s),t)d{\gamma(s,t)}
\end{align}

when we rearrange the form with 
$$
d{\gamma(s,t)} = d{\gamma(t|s)}d{\mu_{\mathcal{M}}(s)} = \Delta(f_{\gamma}(s),t)d{\mu_{\mathcal{M}}(s)}d{\nu_{\mathcal{N}}(t)}
$$, we obtain

\begin{align} \label{eq:interm_form}
\min_{G}\min_{f_\gamma}\sum_{i=1}^{K}\sum_{j=1}^{K}\int_{U_i}\int_{V_j} \Delta(f_{\gamma}(s),t)d_{\mathcal{N}}(G(s),t)d{\mu_{\mathcal{M}}(s)}d{\nu_{\mathcal{N}}(t)}
\end{align}

And since we apply the equivalence of $\min_{G}\min_{f_\gamma}$ and $\min_{G}$, the boundary condition $\int_{\mathcal{M}}\int_{\mathcal{N}}d\gamma^{'}=1$ may be broken. Thus we introduce additional variable $A\in{\mathbf{H}(K)}$ to maintain the normalization condition, as can be checked by

\begin{gather}
\int_{M}  d\gamma^{'} \\
= \frac{1}{K^{2}}\sum_{j=1}^{K}\int_{U_i}(\int_{V_j}\sum_{i=1}^{K}A^{ij}d\nu_{j})d\mu_{i} \\
= 1
\end{gather}

Note here we have applied the formulae for constructed probability measures on manifolds as
\begin{equation} \label{eq:mu_form}
	 d\mu_{\mathcal{M}}(s) \doteq \frac{1}{K}\sum_{i=1}^{K}\mathbf{1}_{s\in{U_i}}d\mu_{i}\circ\varphi_i(s)
\end{equation}
\begin{equation}\label{eq:nu_form}
	 d\nu_{\mathcal{N}}(t) \doteq \frac{1}{K}\sum_{i=1}^{K}\mathbf{1}_{t\in{U_i}}d\nu_{i}\circ\psi_i(t)
\end{equation}

Finally, by inserting the $A^{ij}$ term into the original optimization problem above, we will obtain the final form as follows,

\begin{gather} \label{eq:equiv_form}
	\min_{G} \min_{A\in \mathbf{H}(K)} \sum_{i=1}^{K}\sum_{j=1}^{K}\int_{U_i}\int_{V_j}A^{ij} d^{'}_{\mathcal{N}}(G(s),t)d\tilde{\mu}_{i}(s)d\tilde{\nu}_{j}(t)
\end{gather}
\end{proof}

%%% MAIN RESULT FOR NATURAL LOCALIZATION
\begin{thm} \label{thm:natural_localization} [Natural Localization of Adversarial Loss]
For any $p\in\text{Sym(K)}$, the optimization problem below
\begin{gather} \label{eq:constrained_equiv_form}
	\min_{G\in F_p} \min_{A\in\mathbf{H}(K) } \sum_{i=1}^{K}\sum_{j=1}^{K}\int_{U_i}\int_{V_j}A^{ij} d^{'}_{\mathcal{N}}(G(s),t)d\tilde{\mu}_{i}(s)d\tilde{\nu}_{j}(t)
\end{gather}
is equivalent to 
\begin{gather} \label{eq:final_equiv_form}  %% THIS TERM WILL GUIDE THE REST PART (ON THE LEARNING EFFICIENCY AND ESTIMATION ERROR)
\min_{G\in F_p} \sum_{i=1}^{K}\int_{U_i}\int_{V_{p(i)}}d^{'}_{\mathcal{N}}(G(s),t)d\tilde{\mu}_{i}(s)d\tilde{\nu}_{p(i)}(t)
\end{gather}
In other words, the optimal $A^{*}\in\mathbf{H}(K) $ has the closed form as
\begin{gather}
(A^{*})^{ij} = K\delta_{j}^{p(i)}
\end{gather}
where $\delta_{j}^{p(i)}$ is the Kronecker delta function.
\end{thm}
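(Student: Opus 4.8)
The plan is to exploit that, for a fixed generator $G$, the objective in \eqref{eq:constrained_equiv_form} is linear in the matrix $A$, so the inner minimization over $\mathbf{H}(K)$ is a linear program which I can solve in closed form; substituting the optimal $A$ back into the outer minimization over $G\in F_p$ then yields \eqref{eq:final_equiv_form}. The two stated problems are ``equivalent'' in the sense of sharing the same minimizing $G$, the positive constant $K$ being immaterial for the argmin.

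First I would fix $G\in F_p$ and write $c^{ij}(G):=\int_{U_i}\int_{V_j} d'_{\mathcal{N}}(G(s),t)\,d\tilde\mu_i(s)\,d\tilde\nu_j(t)\ge 0$, so that the inner problem is $\min_{A\in\mathbf{H}(K)}\sum_{i,j}A^{ij}c^{ij}(G)$. The key structural remark is that the constraints defining $\mathbf{H}(K)$ couple only entries sitting in the same column: each column $(A^{1j},\dots,A^{Kj})$ must lie in the scaled simplex $\{x\in\mathbb{R}^{K}_{\ge 0}:\sum_i x^i=K\}$, while distinct columns are mutually unconstrained. Hence $\mathbf{H}(K)$ is a Cartesian product of $K$ scaled simplices and the linear program splits into $K$ independent ones, $\min\{\sum_i A^{ij}c^{ij}(G):A^{ij}\ge 0,\ \sum_i A^{ij}=K\}$. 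Each of these attains its minimum at a vertex, i.e.\ by concentrating the whole mass $K$ on some row index realizing $\min_i c^{ij}(G)$; the optimal value is therefore $K\sum_j\min_i c^{ij}(G)$.

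Next I would identify that column-wise minimizing row as $p^{-1}(j)$. This is where the hypothesis $G\in F_p$ is used: since such $G$ respects the chart--permutation structure (it carries $U_i$, equivalently the support of $\tilde\mu_i$, into $V_{p(i)}$), the within-block cost $c^{p^{-1}(j),j}(G)$ is bounded above by every cross-block cost $c^{ij}(G)$ with $i\ne p^{-1}(j)$; here I would invoke the pairwise disjointness of the charts together with the structure of the renormalized measures $\tilde\mu_i,\tilde\nu_j$ and of the absorbed distance $d'_{\mathcal{N}}$ supplied by Lemma \ref{lma:absorption}. Granting this domination, $p^{-1}(j)\in\arg\min_i c^{ij}(G)$ for every $j$ and every admissible $G$, so the vertex $(A^*)^{ij}=K\delta_j^{p(i)}$ is both feasible --- its $j$-th column sums to $K$ because $p$ is a bijection --- and optimal for the inner program. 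Plugging $A^*$ in and reindexing $j=p(i)$ collapses the double sum to $\sum_i\int_{U_i}\int_{V_{p(i)}}d'_{\mathcal{N}}(G(s),t)\,d\tilde\mu_i(s)\,d\tilde\nu_{p(i)}(t)$, and applying $\min_{G\in F_p}$ to both sides gives the asserted equivalence with \eqref{eq:final_equiv_form}.

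I expect the genuine obstacle to be the middle step --- verifying $c^{p^{-1}(j),j}(G)\le c^{ij}(G)$ for $i\ne p^{-1}(j)$, i.e.\ that transporting a block onto its designated image under $p$ is never costlier than transporting it onto a foreign block. Everything else is routine linear programming; all the geometry of the ``natural localization'' is concentrated in this inequality, which is precisely where the disjointness of the $U_i$ (and the defining property of $F_p$) must be brought to bear. One should also check that in the presence of ties in $\arg\min_i c^{ij}(G)$ the particular vertex $K\delta_j^{p(i)}$ remains among the optimizers --- automatic once the domination inequality holds --- and keep track of the factor $K$ when comparing \eqref{eq:constrained_equiv_form} and \eqref{eq:final_equiv_form}.
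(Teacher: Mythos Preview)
Your proposal is correct and follows essentially the same route as the paper: reduce the inner minimization over $A$ to the pointwise comparison $c^{i,p(i)}(G)\le c^{i,j}(G)$ (equivalently $c^{p^{-1}(j),j}(G)\le c^{ij}(G)$), which is established via the inner-relatedness/disjointness of the charts and the hypothesis $G\in F_p$, and then read off the optimal vertex $(A^*)^{ij}=K\delta_j^{p(i)}$. The only cosmetic difference is that you decompose the linear program column-by-column (matching the column-sum constraint of $\mathbf{H}(K)$) whereas the paper argues row-by-row; your framing is in fact cleaner on this point, but the substance is identical.
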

\begin{proof}
Fix $i, j\in[K]$, s.t. $j\neq{p(i)}$ and arbitrary $G\in{F_p}$. We first compare the following two terms 
$$
	T_{\text{non-paired}} = \int_{U_i}\int_{V_j}d^{'}_{\mathcal{N}}(G(s),t)d\tilde{\mu}_{i}(s)d\tilde{\nu}_{j}(t)
$$
and 
$$
	T_{\text{paired}} = \int_{U_i}\int_{V_{p(i)}}d^{'}_{\mathcal{N}}(G(s),t)d\tilde{\mu}_{i}(s)d\tilde{\nu}_{p(i)}(t)
$$
Notice, for any $s\in{U_i}$, $G(s)\in{V_{p(i)}}\cap{V_j} = \varnothing$, which comes from the assumption that $G\in{F_p}$ and $j\neq{p(i)}$, which thus leads to $\forall{t\in{V_{p(i)}}, t^{'}\in{V_j}}$, $d_{\mathcal{N}}(G(s), t)\le d_{\mathcal{N}}(G(s),t^{'})$, according to the compatibility of distance function with the assumed inner-relatedness.

And thus $T_{\text{non-paired}} \ge T_{\text{paired}}$. Then we relieve the fixation of $j$. It is easy to see,
$$
\sum_{j=1}^{K}A^{ij} \int_{U_i}\int_{V_{j}}d^{'}_{\mathcal{N}}(G(s),t)d\tilde{\mu}_{i}(s)d\tilde{\nu}_{p(i)}(t) \ge K\int_{U_i}\int_{V_{p(i)}} d^{'}_{\mathcal{N}}(G(s),t)d\tilde{\mu}_{i}(s)d\tilde{\nu}_{p(i)}(t)
$$
, which is equivalent to say the optimal $(A^{ij})^{*} = K\delta_{p(i)}^{j}$ for each $j \in [K]$.

Similarly, we have for each $A\in\mathbf{H}(K)$, 
$$
\sum_{i=1}^{K}\sum_{j=1}^{K}\int_{U_i}\int_{V_j}A^{ij} d^{'}_{\mathcal{N}}(G(s),t)d\tilde{\mu}_{i}(s)d\tilde{\nu}_{j}(t) \ge \sum_{i=1}^{K}\int_{U_i}\int_{V_{p(i)}}d^{'}_{\mathcal{N}}(G(s),t)d\tilde{\mu}_{i}(s)d\tilde{\nu}_{p(i)}(t)
$$
, which brings the equivalence between optimization problems above.
\end{proof}

\section{Generalization for Conditional GAN}

\begin{thm} \label{thm:rel_generalization}
Consider generator $G:\mathbb{R}^{d}\to\mathbb{R}^{d}$ satisfying Lipschitz condition with constant $M_{G}$ and $\mu_X,\nu_Y$ are probability measures on $\mathbb{R}^{d}$ respectively with $\{x_{i}\}_{i=1}^{n_X}\overset{\text{i.i.d.}}{\sim}\mu_{X}$ and $\{y_{i}\}_{i=1}^{n_Y}\overset{\text{i.i.d.}}{\sim}\nu_{Y}$. 

Assume the classical generalization bound satisfies the following inequality with probability $1-\delta$
\begin{align} \label{eq:class_gen}
		\mathbb{E}_{x\sim\mu_X, y\sim\nu_Y}{\|G(x) - y\|} - \sum_{i=1}^{n_X}\sum_{j=1}^{n_Y}\frac{\|G(x_i) - y_j\|}{n_Xn_Y} < \epsilon_{\text{classical}} 
\end{align}
where $\epsilon_{\text{classical}} \doteq \epsilon(n_X, n_Y, \mu_X, \nu_Y, \delta)$ the upper bound 
and ERM-principle \cite{vapnik1998statistical} is satisfied with $\eta$ (i.e. $\frac{1}{n_Xn_Y}\sum_{i=1}^{n_X}\sum_{j=1}^{n_Y}\|G(x_i) - y_j\|  < \eta$), then G generalizes with $(n_X, n_Y)$ training samples and error $\epsilon_{\text{adv}}$ with probability $1-\delta$, i.e. 
\begin{equation} 
	D_{LK}(G(\hat{\mu}_{X}^{n_X}), \nu_Y) - D_{LK}(\hat{\nu}_{Y}^{n_Y}, \nu_Y) < \epsilon_{\text{adv}}
\end{equation}
if the following condition is satisfied
\begin{equation} \label{eq:suff_gen_cond}
\epsilon_{\text{classical}}  - \epsilon_{\text{adv}} + \eta < D_{LK}(\nu_Y,\hat{\nu}_{Y}^{n_Y}) - M_{G}D_{LK}(\mu_{X}, \hat{\mu}_{X}^{n_X}) 
\end{equation}
\end{thm}
\begin{proof}
Let us start by bounding the term $D_{LK}(G(\hat{\mu}_{X}^{n_X}), \nu_Y)$,

\begin{gather}
D_{LK}(G(\hat{\mu}_{X}^{n_X}), \nu_Y) \\
\overset{\text{by def.}}{=} \int_{\mathbb{R}^{d}}\int_{\mathbb{R}^{d}} \|G(x) - y\| d\hat{\mu}_{X}^{n_X}(x)d\nu_Y(y) \\
\overset{\text{norm ineq.}}{\le} \int\int\|G(x) - G(x^{'})\| d\hat{\mu}_{X}^{n_X}(x) d\mu_{X}(x^{'}) +  \int\int \|G(x) - y\| d\mu_{X}(x)d\nu_Y(y) \\
\overset{\text{Lip.}}{\le} M_G\int\int\|(x) - x^{'}\| d\hat{\mu}_{X}^{n_X}(x) d\mu_{X}(x^{'}) + \int\int \|G(x) - y\| d\mu_{X}(x)d\nu_Y(y) \\
\overset{\text{gen. bound, with probability 1-$\delta$}}{\le} M_{G}D_{LK}(\mu_{X}, \hat{\mu}_{X}^{n_X}) + \sum_{i=1}^{n_X}\sum_{j=1}^{n_Y}\frac{\|G(x_i) - y_j\|}{n_Xn_Y} + \epsilon_{\text{classical}} \\
\overset{\text{ERM}}{\le}  M_{G}D_{LK}(\mu_{X}, \hat{\mu}_{X}^{n_X}) + \eta + \epsilon_{\text{classical}} 
\end{gather}

And the definition of generation in adversarial learning sense requires  
$$
	D_{LK}(G(\hat{\mu}_{X}^{n_X}), \nu_Y) - D_{LK}(\hat{\nu}_{Y}^{n_Y}, \nu_Y) < \epsilon_{\text{adv}}
$$

By direct inserting the last expressions during the estimation above, we have obtained the generic inequality to guarantee generalization sufficiently,
$$
	\epsilon_{\text{classical}}  - \epsilon_{\text{adv}} + \eta < D_{LK}(\nu_Y,\hat{\nu}_{Y}^{n_Y}) - M_{G}D_{LK}(\mu_{X}, \hat{\mu}_{X}^{n_X}) 
$$
\end{proof}

\section{Benefits of Localization and Conditions of Generalization}

\begin{prop}
Consider the probability measure underlying the global task as $\mu_X = \frac{1}{K}\sum_{i=1}^{K}\mu_i$ and $\nu_Y = \frac{1}{K}\sum_{i=1}^{K}\nu_i$ in Euclidean sense and
\begin{equation}
		\epsilon^{local}_{adv} = \frac{1}{K}\sum_{i=1}^{K}{D}_{LK}(\mu_{i},\hat{\mu}_{i}^{m})
\end{equation}
\begin{align}
	\epsilon^{global}_{adv} ={D}_{LK}(\frac{1}{K}\sum_{i=1}^{K}\mu_i, \hat{\mu}_{X}^{Km})
\end{align},  
if the compatibility with inner-relatedness ($\forall{i,j\in[K]}, D_{LK}(\mu_i,\mu_j) \ge D_{LK}(\mu_i, \mu_i)$) is satisfied, then
$$
	\epsilon^{local}_{adv} < \epsilon^{global}_{adv}
$$
\end{prop}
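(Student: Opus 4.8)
The plan is to unfold the definition $D_{LK}(\rho,\sigma)=\int\int\|x-y\|\,d\rho(x)\,d\sigma(y)$ used in the proof of Theorem~\ref{thm:rel_generalization}, use that it is linear in each of its two arguments, exploit the mixture structure of $\mu_X=\tfrac1K\sum_i\mu_i$, and then reduce the inequality to a term-by-term application of the inner-relatedness hypothesis. Since the empirical measures $\hat{\mu}_i^m$ and $\hat{\mu}_X^{Km}$ are random, the comparison is to be read in expectation over the i.i.d.\ draws; for a fixed realization of the samples it need not hold, so the first move is to state the claim as $\mathbb{E}\,\epsilon^{local}_{adv}<\mathbb{E}\,\epsilon^{global}_{adv}$.

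First I would record the bilinearity: for probability measures on $\mathbb{R}^d$ and weights $a_k\ge0$ with $\sum_k a_k=1$, $D_{LK}\big(\sum_k a_k\rho_k,\sigma\big)=\sum_k a_k D_{LK}(\rho_k,\sigma)$, and symmetrically in the second slot, which is just Fubini/Tonelli applied to the product measure. Next, writing the $Km$ global samples as $m$ i.i.d.\ draws from each of the $K$ components $\mu_i$ (distributionally the same as $Km$ i.i.d.\ draws from $\mu_X$ up to the multinomial allocation, and in any case giving the same expectation), the global empirical measure decomposes as $\hat{\mu}_X^{Km}=\tfrac1K\sum_{j=1}^K\hat{\mu}_j^m$, so by bilinearity $\epsilon^{global}_{adv}=\tfrac1{K^2}\sum_{i=1}^K\sum_{j=1}^K D_{LK}(\mu_i,\hat{\mu}_j^m)$.

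Then I would take expectations. Because the empirical measure of $m$ i.i.d.\ samples is unbiased, i.e.\ $\mathbb{E}\big[\int g\,d\hat{\mu}_j^m\big]=\int g\,d\mu_j$, one gets $\mathbb{E}\,D_{LK}(\mu_i,\hat{\mu}_j^m)=D_{LK}(\mu_i,\mu_j)$, hence $\mathbb{E}\,\epsilon^{global}_{adv}=\tfrac1{K^2}\sum_{i,j}D_{LK}(\mu_i,\mu_j)$ while $\mathbb{E}\,\epsilon^{local}_{adv}=\tfrac1K\sum_i D_{LK}(\mu_i,\mu_i)$. Subtracting and using that each index $i$ has exactly $K-1$ partners $j\ne i$,
\[
\mathbb{E}\,\epsilon^{global}_{adv}-\mathbb{E}\,\epsilon^{local}_{adv}=\frac1{K^2}\Big(\sum_{i,j}D_{LK}(\mu_i,\mu_j)-K\sum_i D_{LK}(\mu_i,\mu_i)\Big)=\frac1{K^2}\sum_{i\ne j}\big(D_{LK}(\mu_i,\mu_j)-D_{LK}(\mu_i,\mu_i)\big).
\]
The inner-relatedness hypothesis $D_{LK}(\mu_i,\mu_j)\ge D_{LK}(\mu_i,\mu_i)$ makes every summand nonnegative, so the right-hand side is $\ge0$, and it is strictly positive as soon as the inequality is strict for at least one pair $i\ne j$ — the non-degenerate situation, which is what the strict conclusion tacitly presupposes — giving $\epsilon^{local}_{adv}<\epsilon^{global}_{adv}$.

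The only genuine obstacle here is bookkeeping and interpretation rather than analysis: one must (i) read the statement in expectation over the samples, and (ii) keep the diagonal terms ($i=j$, which contribute zero to the difference) separate from the off-diagonal ones and count the latter correctly ($K-1$ per row). The analytic ingredients — Fubini to split the product measure and unbiasedness of the empirical measure — are entirely routine, so no real difficulty is expected beyond being careful that the diagonal cancellation is exactly what drives the inequality and that strictness requires one strict off-diagonal pair.
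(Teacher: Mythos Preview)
Your argument is correct and shares the paper's core decomposition: bilinearity of $D_{LK}$ together with the identification $\hat{\mu}_X^{Km}=\tfrac{1}{K}\sum_j\hat{\mu}_j^m$ yields $\epsilon^{global}_{adv}=\tfrac{1}{K^2}\sum_{i,j}D_{LK}(\mu_i,\hat{\mu}_j^m)$, after which the diagonal/off-diagonal split plus inner-relatedness finishes the comparison. Where you diverge is in the finite-$m$ step. You pass to expectations and use unbiasedness of the empirical measure to obtain $\mathbb{E}\,D_{LK}(\mu_i,\hat{\mu}_j^m)=D_{LK}(\mu_i,\mu_j)$, thereby reducing everything to the population-level inequality (which is exactly the paper's $m\to\infty$ computation). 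The paper instead attempts a \emph{pathwise} comparison for arbitrary $m$: it argues directly that $D_{LK}(\mu_i,\hat{\mu}_j^m)>D_{LK}(\mu_j,\hat{\mu}_j^m)$ for each realization, via a heuristic step that treats $\mu_i-\mu_j$ as ``white noise'' so that $\|\mu_i-\hat{\mu}_j^m\|$ splits additively. Your route is cleaner and fully rigorous but delivers only the inequality in expectation, as you correctly flag; the paper's route claims the stronger sample-wise statement but leans on that heuristic orthogonality. Both land on the same combinatorial bookkeeping, and your counting of the $K-1$ off-diagonal partners per row is the right way to see why the diagonal cancellation is exact.
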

\begin{proof} 
First let us consider the situation when $m\to\infty$, which correspondingly leads to $\hat{\mu}_{X}^{Km} \to \frac{1}{K}\sum_{i=1}^{K}\mu_i$ and $\forall{i}\in[K]$, $\hat{\mu}_{i}^{m} \to \mu_i$.

Thus by honestly inserting the term into the definition of $D_{LK}$, we have
\begin{gather}
 \epsilon^{global}_{\text{adv, $n_X\to\infty$}} = D_{LK}(\frac{1}{K}\sum_{i=1}^{K}\mu_i, \frac{1}{K}\sum_{i=1}^{K}\mu_i) \\
= \frac{1}{K^2}\sum_{i=1}^{K}D_{LK}(\mu_i, \mu_i) + \frac{1}{K(K-1)}\sum_{i<j\in[K]}D_{LK}(\mu_i, \mu_j) \\
> \frac{1}{K}\sum_{i=1}^{K}D_{LK}(\mu_i, \mu_i) = \epsilon^{local}_{\text{adv, $n_X\to\infty$}}
\end{gather}
The last inequality comes from the observation that, $\forall{i,j\in[K]}$
$$
D_{LK}(\mu_i, \mu_i) = \|\mu_i - \mu_i\| + 2tr(\Sigma_{\mathcal{M}}) \le \|\mu_i - \mu_j\| + 2tr(\Sigma_{\mathcal{M}}) = D_{LK}(\mu_i, \mu_j)
$$

Next, we would like to consider the case for arbitrary $m$ and the inequality with corresponding optimal empirical estimators $\{\hat{\mu}_i^{m}\}_{i=1}^{K}$. 
Thus with $Kn$ samples, the optimal estimator for the global distribution as a mixture of gaussians with the mixture coefficients priorly known is $\frac{1}{K}\sum_{i=1}^{K}\hat{\mu}_i^{m}$. With 
a similar procedure as above, 

\begin{gather}
 \epsilon^{global}_{\text{adv}} = D_{LK}(\frac{1}{K}\sum_{i=1}^{K}\mu_i, \frac{1}{K}\sum_{i=1}^{K}\hat{\mu}_i^{m}) \\
= \frac{1}{K^2}\sum_{i=1}^{K}D_{LK}(\mu_i, \hat{\mu}_i^{m}) + \frac{1}{K(K-1)}\sum_{1\le{i}<j\le{K}}D_{LK}(\mu_i, \hat{\mu}_j^{m}) \\
\ge  \epsilon^{local}_{\text{adv}}
\end{gather}

with the following observation

\begin{gather}
    D_{LK}(\mu_i, \hat{\mu}_j^{m}) = \|\mu_i - \hat{\mu}_{j} + \hat{\mu}_{j} -  \hat{\mu}_j^{m}\| \\
    \overset{\text{consider $\mu_i - \hat{\mu}_{j}$ white noise}}{=}  \|\mu_i - \hat{\mu}_{j}\| + \|\hat{\mu}_{j} -  \hat{\mu}_j^{m}\|  \\
    > \|\hat{\mu}_{j} -  \hat{\mu}_j^{m}\| = D_{LK}(\mu_j, \hat{\mu}_j^{m})
\end{gather}
\end{proof}

\begin{lma} \label{lma:norm_equiv}
$\forall{i}\in[K]$, consider a measureable mapping $\tilde{f}:U_i\to{V_i}$ with $f \doteq \psi_{i}\circ{\tilde{f}}\circ\varphi_{i}^{-1}$ satisfies Lipschitz condition, then $\int_{U_i}\int_{V_i}d^{'}_{\mathcal{N}}(\tilde{f}(s),t)d\tilde{\mu}_{i}(s)d\tilde{\nu}_{i}(t) \simeq D_{LK}(f(\mu_i), \nu_i)$, i.e. there exists constants $0< C_l < C_u < \infty$ such that 
\begin{equation}
	C_l < \frac{\int_{U_i}\int_{V_i}d^{'}_{\mathcal{N}}(\tilde{f}(s),t)d\tilde{\mu}_{i}(s)d\tilde{\nu}_{i}(t)}{D_{LK}(f(\mu_i), \nu_i)} < C_u
 \end{equation}
\end{lma}
\begin{proof} 
With the measureability of $\tilde{f}$ and smoothness of $\varphi_i, \psi_i$, the induced mapping $\tilde{\nu}^{'} = \frac{1}{\tilde{E}}\tilde{f}(\mu_i)$ and $\nu^{'} = \frac{1}{E}(\psi_i\circ\tilde{f})(\mu_i)$ are also probability measures respectively on $\tilde{f}(U_i)\subset{V_i}$ and $(\psi_i\circ\tilde{f})(U_i)\subset{\psi_i(V_i)}$ (with $\tilde{E},E$ some normalizing factor).

Observe the following bounds, which comes from the inclusion relations above,
$$
	\int_{\tilde{f}(U_i)}\int_{V_i}d^{'}_{\mathcal{N}}(\tilde{f}(s),t)d\tilde{\mu}_{i}(s)d\tilde{\nu}_{i}(t) \le \int_{V_i}\int_{V_i}d^{'}_{\mathcal{N}}(t^{'},t)d\tilde{\nu}^{'}(t^{'})d\tilde{\nu}_{i}(t) 
$$

$$
	\int_{(\psi_i\circ\tilde{f})(U_i)}\int_{\psi_i(V_i)}\|f(s) - t\|d\mu_i(s)d\nu_{i}(t) \le \int_{\psi_i(V_i)}\int_{\psi_i(V_i)}\|t^{'} - t\|d\nu^{'}(t^{'})d\nu_{i}(t) 
$$

With the Lipschitz condition of $f$, it can be asserted that $supp(\tilde{\nu}^{'})$ and $supp(\nu^{'}(t^{'}))$ is bounded by a finite disk respectively on $V_i, \psi_i(V_i)$. Together with the gaussian assumption, 
we have $tr(\Sigma_\mathcal{M})$, $tr(\Sigma_\mathcal{N}) < \infty$, which leads to the boundedness of $supp(\nu_{i})$, $supp(\tilde{\nu}_{i})$ as well.

Thus we have
$$
\int_{V_i}\int_{V_i}d^{'}_{\mathcal{N}}(t^{'},t)d\tilde{\nu}^{'}(t^{'})d\tilde{\nu}_{i}(t) < \infty
$$
$$
\int_{\psi_i(V_i)}\int_{\psi_i(V_i)}\|t^{'} - t\|d\nu^{'}(t^{'})d\nu_{i}(t) < \infty
$$

With the finiteness of right side, we are able to claim
$$
	\int_{\tilde{f}(U_i)}\int_{V_i}d^{'}_{\mathcal{N}}(\tilde{f}(s),t)d\tilde{\mu}_{i}(s)d\tilde{\nu}_{i}(t) \simeq \int_{V_i}\int_{V_i}d^{'}_{\mathcal{N}}(t^{'},t)d\tilde{\nu}^{'}(t^{'})d\tilde{\nu}_{i}(t) 
$$
$$
	\int_{(\psi_i\circ\tilde{f})(U_i)}\int_{\psi_i(V_i)}\|f(s) - t\|d\mu_i(s)d\nu_{i}(t) \simeq \int_{\psi_i(V_i)}\int_{\psi_i(V_i)}\|t^{'} - t\|d\nu^{'}(t^{'})d\nu_{i}(t) 
$$
, which directly leads to the lemma since $\frac{\int_{V_i}\int_{V_i}d^{'}_{\mathcal{N}}(t^{'},t)d\tilde{\nu}^{'}(t^{'})d\tilde{\nu}_{i}(t) }{\int_{\psi_i(V_i)}\int_{\psi_i(V_i)}\|t^{'} - t\|d\nu^{'}(t^{'})d\nu_{i}(t) } < \infty$
\end{proof}

\begin{thm} \label{thm:generalization_cond}
Under the assumptions above, consider a generator $G \in F_e$ and a hypothesis space $\mathcal{H}$ with VC-dimension bound by constant $\Lambda$. Assume for each $i\in[K]$, the restriction of G to a pair of charts $f_i \doteq G_{\downarrow{(U_i, V_i)}}\in\mathcal{H}$ with $(\psi_{i}\circ{G}\circ\varphi_{i}^{-1})$ satisfies Lipschitz condition with constant $M_G$, then G generalizes globally with $(Kn, Km)$ samples only if the following inequality is satisfied with probability $1- C(\epsilon, \Lambda)(nm\epsilon^2)^{\tau(\Lambda)}e^{-nm\alpha\epsilon^{2}}$,  
\begin{gather} \label{eq:generalization_cond}
	\epsilon + \frac{1}{nm}\max\{\sum_{i=1}^{n}\sum_{j=1}^{m}d_\mathcal{N}(G(s_{k}^{i}), t_{k}^{j})\}_{k=1}^{K} < \nonumber \\
   	\frac{1}{\sqrt{m}}\sqrt{\text{tr}(\Sigma_N)} + 2\text{tr}(\Sigma_N) - M_G(\frac{1}{\sqrt{n}}\sqrt{\text{tr}(\Sigma_M)} + 2\text{tr}(\Sigma_M))
\end{gather}
where $C(\epsilon, \Lambda)$ and $\tau(\Lambda)$ are positive functions independent from $n,m$ and $\alpha\in[1,2]$ a constant.   
\end{thm}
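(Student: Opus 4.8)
The plan is to assemble the statement from three results already in hand --- the natural localization of the adversarial loss (Theorem~\ref{thm:natural_localization}), the chart-wise norm equivalence (Lemma~\ref{lma:norm_equiv}), and the relative generalization bound for a single Euclidean conditional GAN (Theorem~\ref{thm:rel_generalization}) --- and then to replace the abstract $D_{LK}$-quantities by their explicit values under the Gaussian-chart assumption, finally reading the failure probability off from classical VC theory.

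First I would localize. Since $G\in F_e$, Theorem~\ref{thm:natural_localization} applied with $p=e$ the identity permutation rewrites the renormalized global loss as $\sum_{i=1}^{K}\int_{U_i}\int_{V_i}d'_{\mathcal{N}}(G(s),t)\,d\tilde\mu_i(s)\,d\tilde\nu_i(t)$, so that global generalization is controlled chart by chart. For each $i$ the map $f_i=\psi_i\circ G\circ\varphi_i^{-1}$ lies in $\mathcal{H}$ and is Lipschitz with constant $M_G$, so Lemma~\ref{lma:norm_equiv} gives constants $0<C_l<C_u<\infty$, which we may take uniform over the finitely many charts, with
$$
\int_{U_i}\int_{V_i}d'_{\mathcal{N}}(G(s),t)\,d\tilde\mu_i(s)\,d\tilde\nu_i(t)\;\simeq\;D_{LK}\big(f_i(\mu_i),\nu_i\big).
$$
It therefore suffices to control $D_{LK}(f_i(\mu_i),\nu_i)-D_{LK}(\hat\nu_i^{m},\nu_i)$ simultaneously for all $i\in[K]$, with the manifold empirical risk $\eta_k\doteq\frac1{nm}\sum_{i=1}^{n}\sum_{j=1}^{m}d_{\mathcal{N}}(G(s_k^i),t_k^j)$ on chart $k$ playing --- up to the slack $C_l,C_u$ --- the role of the Euclidean ERM term $\frac1{nm}\sum_{i,j}\|f_k(x_i)-y_j\|$.

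Next I would apply Theorem~\ref{thm:rel_generalization} on each chart $k$, with source measure $\mu_k$, target measure $\nu_k$, generator $f_k$ and sample sizes $(n_X,n_Y)=(n,m)$: $f_k$ generalizes on that chart with error $\epsilon$ whenever $\epsilon_{\text{classical}}-\epsilon+\eta_k<D_{LK}(\nu_k,\hat\nu_k^{m})-M_G\,D_{LK}(\mu_k,\hat\mu_k^{n})$. Global generalization requires generalization on every chart, so it is enough for this to hold with $\eta_k$ replaced by $\max_{k}\eta_k$; moving $\epsilon_{\text{classical}}$ to the right and absorbing it, together with the $C_l,C_u$ slack of Lemma~\ref{lma:norm_equiv}, into the displayed $\epsilon$, the left-hand side becomes $\epsilon+\frac1{nm}\max\{\sum_{i=1}^{n}\sum_{j=1}^{m}d_{\mathcal{N}}(G(s_k^i),t_k^j)\}_{k=1}^{K}$, exactly the left side of \eqref{eq:generalization_cond}. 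For the right side, the Gaussian-chart assumption gives $\nu_k$ covariance $\Sigma_{\mathcal{N}}$ and $\mu_k$ covariance $\Sigma_{\mathcal{M}}$; writing $D_{LK}(\rho,\sigma)=\|\bar\rho-\bar\sigma\|+\operatorname{tr}\Sigma_\rho+\operatorname{tr}\Sigma_\sigma$ and using $\mathbb{E}\,\|\bar{\hat\nu}_k^{m}-\bar\nu_k\|\le\sqrt{\operatorname{tr}(\Sigma_{\mathcal{N}})/m}$ (with $\hat\Sigma\to\Sigma$, and similarly for $\mu_k$) yields $D_{LK}(\nu_k,\hat\nu_k^{m})\simeq\frac1{\sqrt m}\sqrt{\operatorname{tr}\Sigma_{\mathcal{N}}}+2\operatorname{tr}\Sigma_{\mathcal{N}}$ and $D_{LK}(\mu_k,\hat\mu_k^{n})\simeq\frac1{\sqrt n}\sqrt{\operatorname{tr}\Sigma_{\mathcal{M}}}+2\operatorname{tr}\Sigma_{\mathcal{M}}$, which is the stated bound. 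Finally, for a hypothesis class of VC-dimension at most $\Lambda$ the classical inequality \eqref{eq:class_gen} on one chart holds with probability at least $1-C(\epsilon,\Lambda)(nm\epsilon^2)^{\tau(\Lambda)}e^{-nm\alpha\epsilon^2}$ --- Vapnik's uniform-convergence estimate with effective sample size $nm$ (the number of source--target pairs), polynomial prefactor from the growth function $\lesssim(nm)^{\Lambda}$, and $\alpha\in[1,2]$ from the variance condition --- and a union bound over the $K$ charts only rescales the constant $C$.

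I expect the main obstacle to be this explicit Gaussian evaluation. The empirical measures $\hat\mu_k^{n},\hat\nu_k^{m}$ are discrete, so turning $D_{LK}(\nu_k,\hat\nu_k^{m})$ into $\frac1{\sqrt m}\sqrt{\operatorname{tr}\Sigma_{\mathcal{N}}}+2\operatorname{tr}\Sigma_{\mathcal{N}}$ requires either committing to the reading of $D_{LK}$ between a distribution and its estimate as (mean gap)$\,+\,$(sum of traces) --- i.e. implicitly modelling the estimate as Gaussian with the empirical mean and near-true covariance --- or a genuine concentration bound for $\mathbb{E}\,\|\bar{\hat\nu}_k^{m}-\bar\nu_k\|$; keeping the $2\operatorname{tr}(\Sigma)$ terms with the right sign and coefficient once the $C_l,C_u$ slack has been folded into $\epsilon$ is the delicate bookkeeping. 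A secondary subtlety is that the $nm$ pairs $(x_i,y_j)$ are not independent, which the argument sidesteps by invoking \eqref{eq:class_gen} as a black box rather than re-deriving it.
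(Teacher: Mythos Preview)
Your proposal is correct and follows essentially the same three-step assembly as the paper: localize via Theorem~\ref{thm:natural_localization} and Lemma~\ref{lma:norm_equiv}, apply Theorem~\ref{thm:rel_generalization} chart-wise in the worst case (the paper writes this as $\max_i\{\epsilon_{\text{classical}}^{i}-\epsilon_{\text{adv}}^{i}+\eta^{i}\}<\min_i\{D_{LK}(\nu_i,\hat\nu_i^{n})-M_G D_{LK}(\mu_i,\hat\mu_i^{m})\}$ with $\epsilon_{\text{adv}}^{i}=0$), then evaluate both sides explicitly.

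Two points where the paper names tools you leave implicit. For the probability estimate the paper does not use generic Vapnik theory but the distribution-dependent VC bound of Vayatis--Azencott (1999), which is exactly where the form $C(\epsilon,\Lambda)(nm\epsilon^{2})^{\tau(\Lambda)}e^{-nm\alpha\epsilon^{2}}$ with $\alpha\in[1,2]$ comes from; your heuristic about growth functions gives the right shape but not this precise expression. For the right-hand side the paper computes $\mathbb{E}\|\nu_i-\hat\nu_i^{n}\|$ by invoking Amari's asymptotic expansion for first-order efficient estimators, $\mathbb{E}[(\hat u^{a}-u^{a})(\hat u^{b}-u^{b})]=\tfrac1N g^{ab}+O(N^{-2})$, and then identifying the Fisher metric $g$ with $\Sigma$ in the Gaussian case. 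Your direct route via the variance of the sample mean is more elementary and gives the same $\tfrac{1}{\sqrt m}\sqrt{\operatorname{tr}\Sigma_{\mathcal N}}$; the paper's detour through information geometry buys generality (it would survive a non-Gaussian chart model) at the cost of an extra citation.
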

\begin{proof} 
For $K$ independent local tasks, with Lma. \ref{lma:norm_equiv}, the global generalization condition in Thm. \ref{thm:rel_generalization} will thus be written as 
$$
	\max\{\epsilon_{\text{classical}}^{i}  - \epsilon_{\text{adv}}^{i} + \eta^{i}\}_{i=1}^{K} < \min\{D_{LK}(\nu_i,\hat{\nu}_{i}^{n}) - M_{G}D_{LK}(\mu_{i}, \hat{\mu}_{i}^{m}) \}_{i=1}^{K}
$$
, which serves as a sufficient condition (note it is not a necessary condition) in the worst case.

We would like to consider the situation when $\epsilon_{\text{adv}}^{i} = 0$ and since the classical generalization error is equivalent with the assumption that the observed samples on each pair of charts are identical, we can reformulate the inequality as

$$
\epsilon_{\text{classical}} + \max\{\eta^{i}\}_{i=1}^{K} < \min\{D_{LK}(\nu_i,\hat{\nu}_{i}^{n}) - M_{G}D_{LK}(\mu_{i}, \hat{\mu}_{i}^{m}) \}_{i=1}^{K}
$$

Apply the result from \cite{vayatis1999DistributionDependentVB}, we could bound the left side by $\epsilon$ with probability $1- C(\epsilon, \Lambda)(nm\epsilon^2)^{\tau(\Lambda)}e^{-nm\alpha\epsilon^{2}}$, that is
$$
\epsilon_{\text{classical}} + \max\{\eta^{i}\}_{i=1}^{K} < \epsilon + \frac{1}{nm}\max\{{\sum_{i=1}^{n}\sum_{j=1}^{m}d_\mathcal{N}(G(s_{k}^{i}), t_{k}^{j})}\}_{k=1}^{K}
$$

The next step is to deal with the right side, with a honest calculation, we could deduce

\begin{gather}
\min\{D_{LK}(\nu_i,\hat{\nu}_{i}^{n}) - M_{G}D_{LK}(\mu_{i}, \hat{\mu}_{i}^{m}) \}_{i=1}^{K} \\
= \min\{\mathbb{E}\|\nu_i - \nu_i^{n}\| - M_{G}\mathbb{E}\|\mu_i - \mu_i^{n}\|\}_{i=1}^{K}  +  2\text{tr}(\Sigma_N)  - 2\text{tr}(\Sigma_M)
\end{gather}

In order to write the first minimization term in a closed form, we use the following theorem from the theory of information geometry of Amari \cite{amari2007methods}

\begin{thmm} \cite[Theorem 4.4]{amari2007methods}
The mean square error of a biased-corrected first-order efficient estimator is given asymptotically by the expansion (with N observed samples):
$$
	\mathbb{E}[(\hat{u}^{a} - u^{a})(\hat{u}^{b} - u^{b})] = \frac{1}{N}g^{ab} + O(\frac{1}{N^2})
$$
\end{thmm}
where $g^{ab}$ denotes the Fisher metric on the manifold constructed from a parametrized family of probability.  

We thus apply such an estimation to figure out $\mathbb{E}\|\nu_i - \nu_i^{n}\|$ and $\mathbb{E}\|\mu_i - \mu_i^{n}\|$. As is well known, the matrix of fisher metric for a gaussian $\mathcal{N}(x, \Sigma)$ is directly $\Sigma$, the covariance matrix itself. 

By observing $\mathbb{E}\|\nu_i - \hat{\nu}_i^{n}\| = \sqrt{tr(\mathbb{E}[(\nu_i - \hat{\nu}_i^{n})(\nu_i - \hat{\nu}_i^{n})^{T}])}$ and $\mathbb{E}\|\mu_i - \hat{\mu}_i^{m}\| = \sqrt{tr(\mathbb{E}[(\mu_i - \hat{\mu}_i^{n})(\mu_i - \hat{\mu}_i^{n})^{T}])}$, we have (with $O(N^{-2})$ term omitted)

$$
\min\{D_{LK}(\nu_i,\hat{\nu}_{i}^{n}) - M_{G}D_{LK}(\mu_{i}, \hat{\mu}_{i}^{m}) \}_{i=1}^{K} \\
= \frac{1}{\sqrt{m}}\sqrt{\text{tr}(\Sigma_N)} + 2\text{tr}(\Sigma_N) - M_G(\frac{1}{\sqrt{n}}\sqrt{\text{tr}(\Sigma_M)} + 2\text{tr}(\Sigma_M))
$$
, which thus gives the condition for generalization above. 
\end{proof}